\documentclass{article}

\usepackage[preprint]{neurips_2026}

\usepackage[utf8]{inputenc} 
\usepackage[T1]{fontenc}    

\usepackage{url}            
\usepackage{booktabs}       
\usepackage{amsfonts}       
\usepackage{nicefrac}       
\usepackage{microtype}      
\usepackage{xcolor}         

\usepackage{graphicx}
\usepackage{xcolor}

\usepackage{amsmath}
\usepackage{amssymb}
\usepackage{amsthm}
\usepackage{mathtools}

\usepackage{threeparttable}
\usepackage{arydshln}
\usepackage{array}
\usepackage{multirow}
\usepackage{makecell}
\usepackage{longtable}
\usepackage{siunitx}
\usepackage{algorithm}
\usepackage{algpseudocode}
\usepackage{enumitem}

\usepackage{caption}
\usepackage{subcaption}
\usepackage{float}
\usepackage{wrapfig}
\usepackage{colortbl}
 \usepackage{tcolorbox}
\usepackage{listings}

\usepackage{hyperref}
 
\hypersetup{
    colorlinks=true,
    citecolor=teal,
    linkcolor=teal,
    urlcolor=teal
}

\usepackage{xcolor}
\definecolor{darkteal}{RGB}{0,102,102}
\definecolor{darkgreen}{RGB}{0,102,0}
\definecolor{saddlebrown}{RGB}{139,69,19}
\newcommand{\an}[1]{\textcolor{black}{#1}}

\usepackage{cleveref}
\newtheorem{theorem}{Theorem}
\newtheorem{lemma}{Lemma}

\newtheorem{definition}{Definition}

\DeclareMathOperator*{\argmax}{arg\,max}

\newcommand{\tomlistener}[1]{L_{\textrm{ToM}_{#1}}}

\newcommand{\boundspeaker}[1]{S_{\textrm{bps}_{#1}}}
\newcommand{\basespeaker}[1]{S_{\textrm{base}_{#1}}}
\newcommand{\utterance}{u}
\newcommand{\intention}{z}
\newcommand{\context}{c}
\newcommand{\craft}{\textsc{CRAFT}}

\definecolor{blockblue}{RGB}{37,99,235}
\definecolor{blockorange}{RGB}{255,123,0}
\definecolor{blockgreen}{RGB}{34,197,94}
\definecolor{blockyellow}{RGB}{245,197,24}
\definecolor{blockred}{RGB}{220,38,38}
 \tcbuselibrary{breakable, skins}

\newcommand{\bblue}[1]{\texttt{\textcolor{blockblue}{#1}}}
\newcommand{\borange}[1]{\texttt{\textcolor{blockorange}{#1}}}
\newcommand{\bgreen}[1]{\texttt{\textcolor{blockgreen}{#1}}}
\newcommand{\byellow}[1]{\texttt{\colorbox{gray!12}{\textcolor{blockyellow}{#1}}}}
\newcommand{\bred}[1]{\texttt{\textcolor{blockred}{#1}}}
\newcommand{\failbox}[1]{\colorbox{red!10}{\strut #1}}
\newcommand{\okbox}[1]{\colorbox{green!10}{\strut #1}}

\usepackage{lineno}

\title{\craft{}: Grounded Multi-Agent Coordination Under Partial Information}

\author{%
  Abhijnan Nath
  \And 
  Hannah VanderHoeven
    \And
  Nikhil Krishnaswamy
  \And \\ 
  Situated Grounding and Natural Language (SIGNAL) Lab\thanks{\url{https://www.signallab.ai}} \\
  Department of Computer Science, Colorado State University\\
  Fort Collins, CO 80523 USA \\
  \texttt{\{abhijnan.nath,nkrishna\}@colostate.edu} \\
}

\begin{document}

\maketitle

\begin{abstract}
  We introduce \craft{}, a multi-agent benchmark for evaluating pragmatic communication in large language models (LLMs) under strict partial information conditions. In this setting, multiple agents with complementary but incomplete views must coordinate through natural language to construct a single 3D structure whose final form is \textit{unknown to any individual agent}. We formalize this as a multi-sender Bounded Pragmatic Speaker problem and provide an evaluation and diagnostic framework that decomposes failures into \textit{spatial grounding}, \textit{mind modeling} and \textit{pragmatic communication} errors. Across 8 open-weight and 7 frontier models, we find that stronger reasoning ability does not reliably translate to better coordination, and improved individual communication does not guarantee successful collaboration. These results suggest that multi-agent coordination remains a fundamentally unsolved challenge for current LLMs, and \craft{} provides a novel framework to meaningfully advance evaluation in multi-agent collaboration. Source code and benchmark data: \url{https://github.com/csu-signal/CRAFT}
\end{abstract}


\vspace*{-2mm}
\section{Introduction}
\label{sec:intro}
\vspace*{-2mm}

\an{Large language models (LLMs) have evolved from single-turn 
assistants into components of multi-agent systems where models 
must coordinate toward shared goals through open-ended natural 
language~\citep{kazemitabaar2023novices, zhang2024comprehensive, 
cui2024survey, davidson2025collaborationgap}. Yet coordination 
between LLM agents remains fragile even in simple 
settings~\citep{lupu2021trajectory, agashe-etal-2025-llm, 
singh2025malmm, nath2025learning, riedl2025emergent, 
davidson2025collaborationgap, 
eisenstein2026mtpingevalevaluatingmultiturncollaboration}: 
models struggle with partner modeling and joint decision-making 
even when interacting with identical copies of themselves---and 
critically, even when trained with Reinforcement Learning from 
Human Feedback (RLHF)~\citep{stiennon2020learning}, which 
optimizes for human-preferred responses but does not explicitly 
train for the communicative demands of multi-agent coordination. 
This challenge compounds under \emph{partial observability}, 
where agents hold differing private information and must decide 
what to communicate based on others' knowledge states.}

We argue that the missing capability is \emph{pragmatic communication}---deciding what and how much to say and when to say it, based on other agents’ knowledge and needs~\citep{grice1975logic, frank2012predicting}. While frameworks such as Rational Speech Acts (RSA)~\citep{goodman2013knowledge} and the Bounded Pragmatic Speaker (BPS) model~\citep{nguyen2024languagemodelsboundedpragmatic} formalize this behavior, existing evaluations largely focus on single-agent reasoning or offline interpretation~\citep{jian2024llms, zhu2026distributedpartialinformationpuzzles}. This creates a key gap: LLMs may exhibit strong internal reasoning (formal competence) but fail to use it effectively in interaction (functional competence)~\citep{mahowald2024dissociatinglanguagethoughtlarge, davidson2025collaborationgap}. 

\vspace{-1mm}

\an{Consider the Distributed Partial Information Puzzle (DPIP;~\citealt{zhu2026distributedpartialinformationpuzzles}): 3 \textbf{Directors}, each holding a distinct partial view of a 3D structure, must instruct a \textbf{Builder} to reconstruct it through natural language alone---a multi-party coordination structure our work directly adopts for LLMs. Humans succeed in DPIP through sustained pragmatic repair across multiple turns\footnote{\Cref{tab:dialogue} in Appendix~\ref{app:failure_taxonomy} shows a snippet from a DPIP ``game'' where humans take $\sim$30s and 23 turns of dialogic exchanges involving pragmatic repair to resolve a \textit{single} action.}, producing precisely the complementary, non-redundant coordination that RSA and BPS theory predict–and that collective intelligence research identifies as necessary for group-level synergy~\citep{humphreys1997properties}.}

\an{Yet two critical questions remain unanswered. First, while~\citep{nguyen2024languagemodelsboundedpragmatic} suggests RLHF-trained LLMs approximate bounded pragmatic speakers in dyadic settings, it is unknown whether this extends to multi-agent coordination where each agent simultaneously holds distinct private information and must model multiple partners' knowledge states. Second, existing evaluations in pragmatic-collaborative tasks for LLMs rely on offline analysis~\citep{ma2025pragmatics} or dyadic setups~\citep{wu2024your, davidson2025collaborationgap}---leaving open whether LLM collectives exhibit the integrated pragmatic “synergy” humans achieve in DPIP, or instead fall into “collaboration gaps”~\citep{davidson2025collaborationgap} that emerge when coordination under partial observability breaks down.
CRAFT is designed to answer both: \textbf{does the pragmatic competence that BPS theory predicts and humans exhibit in collaborative tasks like DPIP transfer to LLM collectives operating under the same epistemic asymmetry, and if not, where and how does it break down in LLM-native multi-agent coordination?}}

\begin{figure*}[t]
  \centering
  \includegraphics[width=\linewidth,clip,trim={20 10 20 10}]{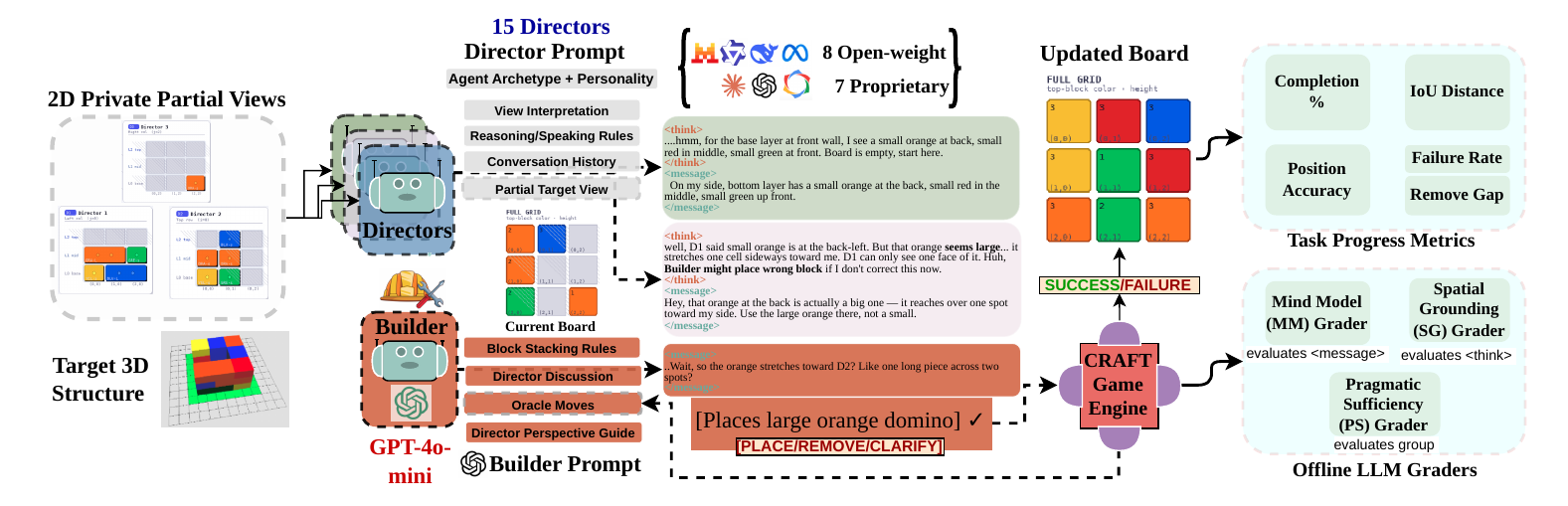}
\vspace*{-2mm}
\caption{\textbf{\craft{} overview.} A structure generator creates a target 3D object and 3 private 2D views for Directors, enforcing information asymmetry. Each turn, Directors produce instructions from their partial views, and a Builder executes them in the \craft{} environment, which logs task progress and evaluates communication for \textbf{spatial grounding}, \textbf{mind modeling}, and \textbf{pragmatic sufficiency}.
\vspace*{-2mm}
}

  \label{fig:craft_overview}
\end{figure*}

 \an{To answer this rigorously and at scale, we  introduce three original contributions: a \textit{procedural structure generator} with enforced information asymmetry; a \textit{physically-grounded game engine} with ground-truth move validation and an oracle-assisted Builder interface to disentangle Builder capability from Director communication, enabling proper credit assignment in multi-agent systems~\citep{huh2024multiagentreinforcementlearningcomprehensive}; and a \textit{3-judge evaluation framework} decomposing communication failures into spatial grounding of private reasoning, epistemic calibration of public messages to the group, and collective pragmatic sufficiency–each judge targeting a distinct failure mode predicted by our \textit{multi-sender BPS formalization} of directors as bounded pragmatic speakers~\citep{frank2012predicting,nguyen2024languagemodelsboundedpragmatic}. Our evaluation across \textbf{15} frontier and open-weight LLMs reveals that scale \textit{alone} does not confer reliable coordination advantages, that individual communication quality does not always lead to better collaboration, and that LLM collectives exhibit previously unknown behavioral patterns such as “correction spirals” under information asymmetry.  These findings expose a fundamental gap between individual communicative competence in LLM agents and collective task success. Fig.~\ref{fig:craft_overview} provides a high-level overview of \craft{}.}

\vspace{-0.2mm}

\vspace*{-3mm}
\section{Related Works}
\label{sec:rw}
\vspace*{-2mm}


\paragraph{Spatial reasoning and multi-agent coordination under 
partial observability}
Recent work has advanced spatial reasoning for LLMs across viewpoint 
consistency, 3D reasoning, and object localization~\citep{kamath2023s, 
ma20253dsrbench, du2024embspatial, 
li2025viewspatialbenchevaluatingmultiperspectivespatial, 
liu2025spatialreasoningmultimodallarge, zhang2025sphere, 
xu2026spatialbenchbenchmarkingmultimodallarge, yeh2026seeing}, with 
extensions to dialogue and navigation~\citep{bickmore2005social, 
gao2022dialfred, zhang2024tagmaptextbasedmap, hou2025driveagent, 
martorell2025textspacemappingabstract, 
zheng2025spatiotemporalllmreasoningenvironments}. In parallel, LLM 
coordination has been studied in symmetric games, zero-shot partner 
matching, and agentic pipelines~\citep{lupu2021trajectory, 
jiang2024fullydecentralizedcooperativemultiagent, 
agashe-etal-2025-llm, grotschla2025agentsnet, 
maslej2025artificialintelligenceindexreport, 
nath-etal-2025-frictional, nath2025collaborate, nath2025learning, 
singh2025malmm, chen2026wsmultiagentcommunicationtalks, 
hayler2026zeroshot}. However, spatial benchmarks are typically 
single-agent perception tasks, while coordination studies focus on 
symmetric or abstract settings with limited grounding in shared 
physical state~\citep{liu2025spatialreasoningmultimodallarge, 
Mohammadi_2025, tran2025multiagentcollaborationmechanismssurvey, 
chen2026wsmultiagentcommunicationtalks}. Real-world collaboration 
instead requires agents to operate under partial observability with 
complementary private evidence communicated over multiple 
turns~\citep{chen2024agentverse, 
liang2025llmhanabievaluatingmultiagentgameplays}, where coordination 
remains fragile even in simplified 
environments~\citep{tian2020joint, davidson2024evaluating, 
davidson2025collaborationgap, grotschla2025agentsnet, wu2025collabllm} 
and demands informative, concise communication~\citep{garrod2004conversation, 
davidson2025collaborationgap}. Motivated by distributed partial 
information tasks~\citep{zhu2026distributedpartialinformationpuzzles} 
and reference games~\citep{nunberg1978pragmatics, andreas2016reasoning, 
ma2025pragmatics}, \craft{} addresses this gap by evaluating pragmatic 
communication in grounded multi-agent settings with asymmetric spatial 
information, measuring both task completion and communication 
quality~\citep{amirkhani2022consensus}.

\vspace*{-2mm}
\paragraph{Pragmatic reasoning and bounded pragmatic speakers}
Pragmatic language use has been widely studied through Gricean accounts and Rational Speech Acts (RSA), where speakers select utterances by modeling a listener’s beliefs and informational needs~\citep{grice1975logic, frank2012predicting, goodman2013knowledge}. Prior work has explored pragmatics in controlled reference games, typically in two-agent settings~\citep{fried2018unified, khani2018planninginferencepragmaticssequential, nematzadeh2018evaluating, louis2020d, hu2023fine, ruis2023goldilocks, zhang2023coder, estienne-etal-2025-collaborative}, while recent work extends this perspective to LLMs via the Bounded Pragmatic Speaker (BPS) framework~\citep{nguyen2024languagemodelsboundedpragmatic}.Benchmarks such as DiPlomat show that LLMs struggle with context-sensitive implicature and situated interpretation~\citep{li2023diplomat}, and surveys highlight that \an{existing evaluations emphasize pragmatic 
\emph{understanding} over \emph{production}, yet 
LLMs perform substantially better as listeners than 
as speakers~\citep{ferreira2005psycholinguistics, 
meyer2016same, sieker2026hypocritical, 
krause-vossen-2024-gricean-maxims, 
park2024pragmaticcompetenceevaluationlarge}---a 
asymmetry that matters precisely because models may 
encode rich representations yet fail to deploy them 
in interaction~\citep{mahowald2024dissociatinglanguagethoughtlarge, 
wu2024your, davidson2025collaborationgap}. \craft{} 
targets this gap by separating the pragmatic 
“speaker” role from the “listener”
role, isolating production as the operative 
challenge in a multi-agent partially observable 
setting where joint action depends on it.}

\vspace{-2mm}

\vspace*{-2mm}
\section{\craft{} as a Multi-Sender Bounded Pragmatic Speaker Framework}
\vspace*{-2mm}

The Bounded Pragmatic Speaker (BPS) framework~\citep{nguyen2024languagemodelsboundedpragmatic}
provides a unified account of pragmatic language production in LLMs, in which a \textit{pragmatic speaker} selects utterances by jointly reasoning
about a base generative model and a Theory-of-Mind (ToM) listener that
evaluates how well an utterance communicates an intended meaning.



\begin{definition}[Bounded Pragmatic Speaker]
\label{def:bps}
Let $\basespeaker{}(\utterance \mid \intention^{\star}, \context)$ be a
\emph{base speaker} distribution over utterances $\utterance \in \mathcal{U}$,
conditioned on an intention $\intention^{\star} \in \mathcal{Z}$ and
context $\context$.
Let $\tomlistener{}(\intention^{\star} \mid \utterance, \context)$ be a
\emph{Theory-of-Mind (ToM) listener} that scores how faithfully
$\utterance$ communicates $\intention^{\star}$.
A \emph{Bounded Pragmatic Speaker} (BPS) selects utterances as:
\begin{align}
    \boundspeaker{}(\utterance \mid \intention^{\star}, \context)
    \;\propto\;
    \basespeaker{}(\utterance \mid \intention^{\star}, \context)
    \cdot
    \tomlistener{}(\intention^{\star} \mid \utterance, \context).
\label{eqn:bps}
\end{align}
\end{definition}

\an{\cite{nguyen2024languagemodelsboundedpragmatic} show that any language model $S_\theta$ can be
viewed as a BPS by setting both modules to $S_\theta$, and that
RLHF fine-tuning~\citep{stiennon2020learning, alignment_handbook2023} is equivalent to variational inference on a BPS whose
ToM listener is a learned reward function $R_\phi$.
We extend this framework to the \emph{multi-agent, grounded,
partial-information} setting of \craft{}.}

\vspace*{-3mm}
\subsection{Directors as Bounded Pragmatic Speakers}
\label{sec:directors_as_bps}
\vspace*{-2mm}

Director $D_i$ at turn $t$ has an \emph{intention} 
$z_{i,t}^{\star} = \Delta_i(s_t, \mathcal{T})$---the grounded 
gap between $D_i$'s target view $\mathcal{T}$ and the current 
world state $s_t$ (the observable board configuration at turn 
$t$)---and a \emph{context} $c_{i,t} = (o_{i,t},\; h_t,\; 
\{u_{j,t}\}_{j \neq i})$ comprising their private observation, 
the conversation history, and the current-turn utterances of 
the other Directors. Under the BPS framework, $D_i$'s policy is:
\begin{align}
    \pi_{D_i}(u_{i,t} \mid z_{i,t}^{\star}, c_{i,t})
    \;\propto\;
    \basespeaker{i}(u_{i,t} \mid z_{i,t}^{\star}, c_{i,t})
    \cdot
    \tomlistener{i}(z_{i,t}^{\star} \mid u_{i,t}, c_{i,t}),
\label{eqn:director_bps}
\end{align}
where $\tomlistener{i}$ models how the Builder interprets 
$u_{i,t}$ given the already-communicated context. However, 
the goal of each Director $D_i$ is not merely to produce 
an utterance consistent with its private view $o_{i,t}$, 
but to select $u_{i,t}$ that maximizes its marginal 
contribution to a \emph{joint} listener aggregating all 
three Directors---resolving the Builder's remaining 
uncertainty about $z^\star$ given 
$\{u_{j,t}\}_{j \neq i} \subset c_{i,t}$, such that the 
collective output moves $s_t$ progressively toward the ground truth 
target $\mathcal{T}$.

\begin{definition}[Joint ToM Listener]
\label{def:joint_tom}
The \emph{joint ToM listener} for the Builder is:
\begin{align}
    \tomlistener{\mathrm{joint}}
        \!\left(z^{\star} \mid u_1, u_2, u_3, c\right)
    \;\propto\;
    \exp\!\left(
        \sum_{i=1}^{3} \lambda_i\, R_i(u_{i}, s_t, \mathcal{T})
    \right),
\label{eqn:joint_tom}
\end{align}
where $R_i(u_i, s_t, \mathcal{T})$ is a reward signal measuring the
downstream task progress attributable to Director $D_i$'s utterance
$u_{i,t}$, and $\lambda_i \geq 0$ are weighting coefficients.
\end{definition}

This leads to the main theoretical result of this section.

\begin{theorem}[\textsc{CRAFT} as a Multi-Sender BPS]
\label{thm:craft_bps}
Under Eq.~\ref{eqn:bps} and Eq.~\ref{eqn:joint_tom}, let
$\basespeaker{i}(\cdot \mid z_i^{\star}, c_i)$ be the base speaker
of director $D_i$ and let $z^{\star} = (z_1^{\star}, z_2^{\star},
z_3^{\star})$ be the joint intention vector, where $c_i = (o_{i,t}, h_t, \{u_{j,t}\}_{j \neq i})$ is director
$D_i$'s private context and
$c = (h_t, s_t)$ is the shared public context available to all agents.
The optimal joint Director policy is equivalent to a \textbf{multi-sender Bounded Pragmatic Speaker}. Formally, the optimal joint Director policy satisfies:
\begin{align}
    \pi^{\star}(u_1, u_2, u_3 \mid z^{\star}, c)
    \;\propto\;
    \left(\prod_{i=1}^{3}
    \basespeaker{i}(u_i \mid z_i^{\star}, c_i)\right)
    \cdot
    \tomlistener{\mathrm{joint}}(z^{\star} \mid u_1, u_2, u_3, c),
\label{eqn:craft_bps}
\end{align}
which reduces to the standard single-sender BPS 
(\Cref{def:bps}) under $N=1$ (see Appendix~\ref{app:proofs} for proofs.).
\end{theorem}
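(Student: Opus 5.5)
The plan is to make ``optimal joint Director policy'' precise as the solution of a KL-regularized (variational) objective, and then read off Eq.~\ref{eqn:craft_bps} as its Gibbs solution, mirroring the RLHF-as-variational-inference argument of \cite{nguyen2024languagemodelsboundedpragmatic}. Concretely, define the joint objective
\begin{align*}
    \mathcal{J}(\pi) \;=\; \mathbb{E}_{(u_1,u_2,u_3)\sim\pi}\Bigl[\textstyle\sum_{i=1}^{3}\lambda_i R_i(u_i,s_t,\mathcal{T})\Bigr] \;-\; \mathrm{KL}\Bigl(\pi \,\Big\Vert\, \textstyle\prod_{i=1}^{3}\basespeaker{i}(\cdot\mid z_i^{\star},c_i)\Bigr),
\end{align*}
where the reference measure is the product of the Directors' base speakers. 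The product form is justified because, conditional on the full context (each $c_i$ already contains the public history $h_t$ and the other Directors' current utterances), the base generative models of the Directors are run independently. The regularization temperature is absorbed into the $\lambda_i$.

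The key steps, in order, are as follows.

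\textbf{Step (i): Gibbs variational principle.} For any reference $q$ and bounded $f$, $\mathbb{E}_\pi[f]-\mathrm{KL}(\pi\Vert q) = \log Z - \mathrm{KL}(\pi \Vert q e^{f}/Z)$, where $Z=\mathbb{E}_q[e^f]$. The objective is therefore uniquely maximized by $\pi^\star\propto q\,e^{f}$.

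\textbf{Step (ii): Substitution.} Take $q=\prod_i \basespeaker{i}$ and $f=\sum_i\lambda_i R_i$. Since Definition~\ref{def:joint_tom} states $\tomlistener{\mathrm{joint}}\propto\exp(\sum_i\lambda_iR_i)$, the maximizer is exactly the right-hand side of Eq.~\ref{eqn:craft_bps}. The normalizer of $\tomlistener{\mathrm{joint}}$ over $z^\star$ does not depend on $(u_1,u_2,u_3)$ once $s_t,\mathcal{T}$ are fixed, so it disappears into the proportionality constant. This is the sense in which the optimal joint policy ``is'' a BPS with product base speaker and joint ToM listener.

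\textbf{Step (iii): Reduction to $N=1$.} With one Director the product has a single factor and the exponent has a single reward, so $\pi^\star\propto\basespeaker{1}\cdot\tomlistener{1}$. This is Eq.~\ref{eqn:bps}, with $\tomlistener{}\propto\exp(\lambda_1R_1)$ playing the role of the reward-listener exactly as in the RLHF correspondence recalled after Definition~\ref{def:bps}.

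I expect the main obstacle to be the modelling step before Step (i): justifying that the reference distribution factorizes as $\prod_i\basespeaker{i}(u_i\mid z_i^\star,c_i)$ even though each $c_i$ contains the other Directors' current-turn utterances, which creates a circular dependence. I would first resolve this by treating the $c_i$ as fixed conditioning information at the moment of optimization (a mean-field/fixed-point reading), and note that Eq.~\ref{eqn:craft_bps} is then stated conditionally on those contexts. If that is too loose, I would instead impose a sequential ordering of Directors within a turn, so that the product becomes a chain-rule factorization; the same Gibbs argument then goes through unchanged. A secondary point to state explicitly is that ``optimal'' means optimal for this KL-regularized objective, since without regularization the optimum would collapse to an argmax and no longer be proportional to the base speakers.
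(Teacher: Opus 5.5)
Your proof is correct, but it takes a different route from the paper's. The paper never writes down an objective for the joint policy. Its Step~1 posits a per-Director argmax of $\log\basespeaker{i}+\log\tomlistener{\mathrm{joint}}$. Step~2 asserts $p(u_1,u_2,u_3\mid z^\star,c)=\prod_i\basespeaker{i}(u_i\mid z_i^\star,c_i)$ by conditional independence given $c$. Step~3 simply multiplies this by $\tomlistener{\mathrm{joint}}$ and normalizes. It then adds that $\exp(\sum_i\lambda_iR_i)=\prod_i\exp(\lambda_iR_i)$, so the joint policy is a product of per-Director BPS factors.

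You instead define optimality as maximizing KL-regularized expected reward against the product of base speakers. You then derive the Gibbs form, with uniqueness, from the variational identity, which is the RLHF-as-variational-inference mechanism of \citet{nguyen2024languagemodelsboundedpragmatic}. This buys you two things. First, a concrete optimization problem of which Eq.~\ref{eqn:craft_bps} is the unique solution; the paper's Step~1 argmax selects a mode, not a distribution, and does not by itself yield the proportionality used in Step~3. Second, an explicit treatment of the fact that $c_i$ contains $\{u_{j,t}\}_{j\neq i}$, which the paper's Step~2 passes over. The paper's per-Director factorization comes free as a corollary of your route, since both $q$ and $e^f$ factorize over $i$.

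The obstacle you flag is milder than you fear. The identity $\mathbb{E}_\pi[f]-\sum_u\pi(u)\log\frac{\pi(u)}{q(u)}=\log Z-\mathrm{KL}(\pi\,\Vert\,qe^f/Z)$ holds for any positive, unnormalized $q$ with $Z=\sum_u q(u)e^{f(u)}<\infty$. So the proportionality survives even if the product of mutually conditioned base speakers is not a normalized joint distribution. Your fixed-context and sequential readings are only needed if you want the regularizer to be a genuine KL to a base distribution. The sequential reading also matches how the game actually collects Director turns.

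One sentence in Step~(ii) needs fixing: your justification for discarding the normalizer of $\tomlistener{\mathrm{joint}}$ is not right as stated. With $s_t,\mathcal{T}$ fixed, the right-hand side of Eq.~\ref{eqn:joint_tom} does not involve $z^\star$, so normalizing over $z^\star$ would make the listener constant in the utterances. If instead $\mathcal{T}$ varies with $z^\star$, the normalizer generally does depend on $(u_1,u_2,u_3)$. What you need, and what the paper implicitly uses when it substitutes Eq.~\ref{eqn:joint_tom} directly into the proportionality, is to read Eq.~\ref{eqn:joint_tom} as fixing $\tomlistener{\mathrm{joint}}$ up to a factor independent of the utterances. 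State that as the interpretation, not as a property of the normalizer.
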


\autoref{thm:craft_bps} establishes that optimal Director behavior requires each $D_i$ to satisfy two simultaneous conditions: (i)~faithfully encoding its private observations into candidate 
utterances via $\basespeaker{i}$, and (ii)~calibrating those 
utterances to the \emph{joint} information need of the Builder via $\tomlistener{\mathrm{joint}}$---contributing exactly the information the Builder requires that has not already been 
provided by the other Directors.\footnote{This framing is consistent with Gricean cooperativity~\citep{grice1975logic}: 
the joint ToM listener's calibration requirement subsumes the Maxim of Quantity in a grounded multi-agent setting.}  Importantly, this condition is irreducible to any individual Director's output quality: a \textbf{communication failure} 
occurs when $\tomlistener{\mathrm{joint}}$ fails to place sufficient mass on $z^\star$ despite it being reachable\footnote{In \craft{}, reachability is instantiated by 
oracle-verified moves---physically valid actions 
guaranteed to make forward progress from $s_t$ toward 
$\mathcal{T}$ (Sec.~\ref{ssec:structure_experiments}).}, regardless of how well any individual Director reasoned or communicated.

\vspace{-2mm}



\paragraph{Failure Modes in Multi-Sender BPS}
\citet{nguyen2024languagemodelsboundedpragmatic} identifies three failure modes for a single-sender BPS: \textbf{F1}~({\it limited 
search}), \textbf{F2}~({\it flawed pragmatics}), and \textbf{F3}~({\it inefficient inference}). In \craft{}, F1 occurs when $\basespeaker{i}$ fails to identify the correct missing block 
from $D_i$'s visible wall; F3 occurs when correct private reasoning fails to transfer into a sufficiently specified public message. Both 
remain agent-local. F2 becomes strictly harder in the multi-sender setting: each Director must calibrate against $\tomlistener{\mathrm{joint}}$ rather than a single listener, accounting for what all other Directors have already communicated. As implied in \Cref{thm:craft_bps}, a Director may avoid F1--F3 individually yet still contribute to a 
group-level failure---the irreducible fourth failure mode motivating the diagnostic framework in \Cref{ssec:evaluation_strategy}.



\vspace*{-3mm}
\section{Agent Architecture with LLMs}
\label{sec:agents}
\vspace*{-2mm}


\begin{figure}[t]
  \centering
    \includegraphics[width=\linewidth]{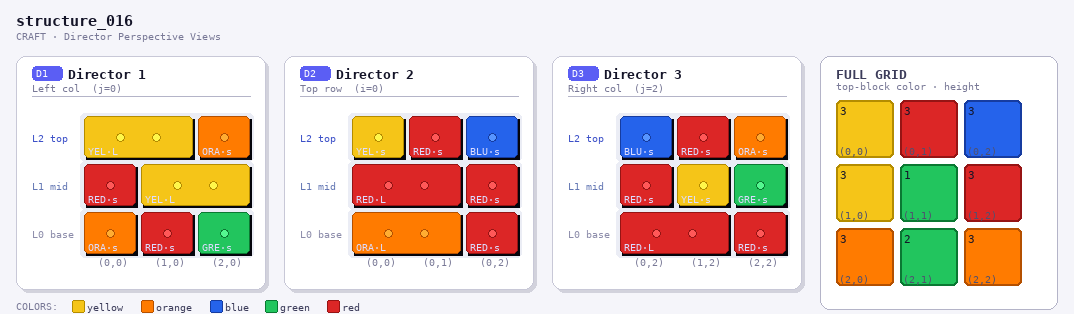}
\vspace*{-2mm}
\caption{Director perspective views for a 25 block structure with the ``camera'' at the ``bottom'' of the {\it Full Grid} view. D1 (left wall), D2 (far wall), and D3 (right wall) each represent a fixed 2D projection across all vertical layers. The Full Grid minimap shows 
ground-truth stack heights.
\vspace*{-2mm}
}

\label{fig:director_views}
\end{figure}

\subsection{Role, Task and Agent Setup}
\label{subsec:task_setup}
\vspace*{-2mm}
\an{\craft{} instantiates a \textbf{pragmatic speaker} 
role (Directors) and a \textbf{pragmatic listener} 
role (Builder), directly reflecting the multi-sender 
BPS framework (Sec.~\ref{sec:directors_as_bps}) and 
the well-established asymmetry between pragmatic 
production and comprehension~\citep{ferreira2005psycholinguistics, 
sieker2026hypocritical}: Directors must 
\emph{produce} pragmatically calibrated instructions 
under partial observability---the harder and less 
reliable capability in LLMs---while the Builder 
\emph{interprets} and acts on those instructions, 
a task LLMs handle more reliably. This is reflected in the Director's outputs which are full natural language utterances focusing on pragmatic production while the Builder has a restricted action space (e.g., selection of moves) focusing on pragmatic comprehension.  }
\vspace{-3mm}
\paragraph{View Geometry} As shown in Fig.~\ref{fig:director_views}, \textbf{three Director agents} (D1, D2 and D3) and \textbf{one Builder agent} collaborate in a “turn-by-turn” synchronous manner~\citep{ivison2023camels, nath2025collaborate} to 
reconstruct the target 3D structure on a $3{\times}3$ grid, where each ``cell'' 
holds a stack of up to three small or large (\textit{domino}) blocks in one of five colors. Each Director has a private 2D projection of the 
target corresponding to one wall of the structure (see Fig.~\ref{fig:director_views}), while the Builder observes only the current 
board state and the Directors' natural language messages. 
Private views overlap only at corners shared by two Directors, so \textit{no individual} can 
reconstruct the full target unilaterally, and the same physical block can appear 
as a different size to different Directors depending on whether both cells of a domino fall within their projection (cf. Fig.~\ref{fig:director_views}, D1 vs. D2). Success therefore requires each Director to reason 
about what information they uniquely hold and 
calibrate their public utterances accordingly (Sec.~\ref{sec:directors_as_bps}).




\vspace*{-4mm}
\subsection{Director Agent}
\label{subsec:director}
\vspace*{-3mm}

Each Director agent receives its private target view, the current board state, and the shared conversation history, and produces two outputs per turn: a private \texttt{<think>} block containing unconstrained spatial reasoning, and a public \texttt{<message>}. This 
two-part structure directly operationalizes the BPS base speaker and ToM listener distinction: the 
\texttt{<think>} block is where the director identifies missing blocks from its visible wall ($\basespeaker{i}$), and the public \texttt{<message>} is where that reasoning 
is distilled into an utterance calibrated given what the Builder and other Directors already know ($\tomlistener{i}$). Crucially, this separation allows \craft{} to diagnose whether a Director reasoned correctly but failed to transfer that into a sufficiently specified public message---a failure mode invisible to single-output 
architectures and directly measurable here through the gap between private reasoning quality and public communication quality. The Director prompt and its constituent parts are given in Figs.~\ref{fig:director_initial_prompt}--\ref{fig:director_initial_prompt3} in Appendix~\ref{app:agent_prompts}.





\vspace*{-3mm}
\subsection{Builder Agent}
\label{subsec:builder}
\vspace*{-2mm}

The Builder agent observes Director messages and conversation history each turn and executes \textbf{one} block placement or removal, or makes a request for clarification. It has no access to $\mathcal{T}$ and, by design, no independent goal toward $z^\star$---its sole task is to 
interpret Director descriptions as a pragmatic 
listener. However, in this multi-agent LLM setting, the Builder's 
own reasoning capacity introduces potential variance independent of Director communication quality. Oracle-verified candidates grounded in $\mathcal{T}$ address this by providing the Builder with pre-computed forward-progress moves from $s_t$, removing the need for independent reasoning about $z^\star$: the Builder selects among verified candidates rather than inferring reachability itself, directly operationalizing the reachability 
condition under which communication failure is defined (Sec.~\ref{sec:directors_as_bps}). Since oracle moves reflect \textit{locally} verified progress from $s_t$ rather than a \textit{globally} optimal trajectory toward $\mathcal{T}$, 
Directors must still coordinate coherently across turns for task completion to accumulate. Figs.~\ref{fig:builder_initial_prompt}--\ref{fig:builder_initial_prompt3} in Appendix~\ref{app:agent_prompts} show the full Builder prompts. Appendix~\ref{app:structure_and_oracle} provides experimental details on oracle candidates.

\vspace*{-2mm}
\section{Experiments}
\label{sec:experiments}
\vspace*{-2mm}


 \subsection{Generating Target 3D Structures}
 \label{ssec:structure_generation}
\vspace*{-2mm}

Target structures are generated by first assigning stack heights to grid cells. There are seven \textit{required} cells around the edges of the board, excluding the ``bottom edge'' (Fig.~\ref{fig:director_views}), which have 3 layers of blocks. The remaining 2 cells receive a uniformly sampled height $0-2$. Each layer is then tiled according to the calculated height at each cell using mix of small or large blocks. A large block occupies two adjacent cells \emph{on the same layer}, forming a \textit{domino} pair. Small blocks occupy a single position. Colors are sampled uniformly from the five options. Appendix~\ref{ssec:generation} provides details.
 
\vspace*{-3mm}
 \subsection{Collaborative Construction Gameplay}
 \label{ssec:structure_experiments}
\vspace*{-2mm}

We evaluate LLMs acting as the Directors and the Builder building 20 structures drawn from 
our evaluation set, spanning 7 simple, 8 medium, and 5 complex 
configurations with block counts ranging from 21 to 25 (mean 23.2). 
Every game starts from an empty board.
Each Director is assigned a 
personality archetype~\citep{jung2023archetypes} or “persona”~\citep{sun2024buildingbetteraiagents}. This is done to meaningfully increase diversity in Director utterances so that they do not converge to similar templatic statements, and are assigned deterministically to ensure consistent roles across all model evaluations. A game for each model-structure combination consists of 20 communicative turns, each of which consists of collecting responses from a randomized 
selection of one to three unique Directors sequentially, followed by a Builder move 
selection conditioned on the Director discussion that updates the current board (See Fig.~\ref{fig:craft_overview}). This reflects natural multi-party 
conversation dynamics~\citep{ganesh2023survey}, where Director participation 
varies per turn---on average two Directors contribute per turn. Additionally, in each turn, the current conversation history of all prior Director responses is shown to the Directors, while the Builder only observes the current turn's Director responses. For details on structures and Director personalities, see \Cref{tab:archetypes} in Appendix~\ref{app:structure_and_oracle}.

\vspace*{-3mm}
\subsection{Evaluation Strategy}
\label{ssec:evaluation_strategy}
\vspace*{-2mm}

We report metrics at turn 20 that measure how close the final board state is to the target structure. The most important are {\bf completion percentage}, or percentage of blocks whose color and position exactly match those of the final structure; and {\bf overall progress}, the unweighted mean of completion percentage, IoU of the current vs. target structure, and accuracy of block {\it colors} in each cell independent of vertical layer order. 
We additionally report remove gap ($\text{Gap} = \texttt{REMOVE} -
\text{oracle remove rate}$), averaged across turns. A positive
gap indicates Directors instructed more removals than the board required. Appendix~\ref{subsec:metrics} provides more details on task metrics. 

\vspace*{-2mm}
\paragraph{Automatic Grading} 
A pragmatically competent Director produces a message that is not a
transcription of its internal reasoning but a selective, non-redundant
instruction calibrated to current shared knowledge. This motivates three diagnostically-independent  \textit{multidimensional} LLM “graders”~\citep{zheng2023judging,davidson2025collaborationgap}.\footnote{Due to the large scale experiments in our evaluations involving thousands of trajectory logs with long prompts, human evaluation was not feasible. } The \textbf{Spatial Grounding} (SG) judge evaluates whether each Director's
private {\tt <think>} block correctly
identifies the missing block, its layer, size, and physical
executability (F1; prompt with questions in Fig.~\ref{fig:spatial_judge_prompt}).
The \textbf{Mind Modeling}\footnote{We use ``Mind Model'' in the sense of  evaluating $\tomlistener{\mathrm{joint}}$ qualities in Director 
messages~\citep{riemer2024position, xu2024walking}, not as a claim 
about functional ToM capacities~\citep{mahowald2024dissociatinglanguagethoughtlarge}.} (MM) judge evaluates whether the Director's public message adds novel information,
leverages the Director's unique wall perspective, and acknowledges
conflicts with other directors (F2;
Fig.~\ref{fig:mind_model_judge_prompt}). 
The SG and MM judges measure the quality of a Director's individual reasoning and public message, but a communication failure (Sec.~\ref{sec:directors_as_bps}) can still happen if the \textit{collective} Director output was insufficient for a rational Builder to identify an oracle move.  As such, the \textbf{Pragmatic Sufficiency} (PS) judge evaluates 
whether the collective Director output was sufficient for 
a rational Builder to identify at least one oracle-correct 
move---the empirical instantiation of $\tomlistener{\mathrm{joint}}$ 
placing sufficient mass on $z^\star$ 
(Sec.~\ref{sec:directors_as_bps}; 
Fig.~\ref{fig:pragmatic_judge_prompt}), a property 
irreducible to any individual Director's utterance. 





We score each grader question by mapping the model response to a
value: \textit{Yes} $= 1$, \textit{No} $= 0$, and \textit{Unclear} $=
0.5$. Overall SG and MM scores are the mean across all questions for a
given director turn; the PS score is the mean across applicable
questions for a given collective turn. All scores are averaged across
\textbf{three} independent grader runs to reduce variance from stochastic model
outputs. All judge prompts with questions are provided in Figs.~\ref{fig:spatial_judge_prompt}--\ref{fig:pragmatic_judge_prompt} (Appendix~\ref{app:llm_judge_prompts}).

\vspace*{-2mm}
\paragraph{Models and Implementation}
We evaluate 15 Director models. Eight are open-weight 7B--72B parameter models: Qwen-2.5-Instruct 7B/14B/32B/72B~\citep{qwen2025qwen25technicalreport}, 
Llama-3.1-8B Instruct~\citep{llama3modelcard}, Mistral-7B-Instruct-v0.3~\citep{jiang2023mistral}, 
Gemma 2 9B Instruct~\citep{team2024gemma}, DeepSeek-V2-Lite~\citep{shao2024deepseekmath}. Seven are frontier/proprietary models: GPT-4o, GPT-4o-Mini, GPT-4.1-Mini~\citep{openai2023gpt4}, 
Claude-Sonnet-4.6, Gemini-2.5-Flash, 
Gemini-3-Flash, Gemini-3.1-Flash-Lite.\footnote{For brevity, we refer to models by shortened names hereafter 
(e.g., Qwen-7B for Qwen-2.5-Instruct-7B, Mistral-7B for Mistral-7B-Instruct-v0.3, Gemini-3-Flash for 
Gemini-3-Flash-Preview).} All are paired with a fixed GPT-4o-mini~\citep{openai2024gpt4ocard} Builder for all experiments. 
Each model is evaluated 3 times independently on 20 test structures over 20 turns. Open-weight models use a 512-token output budget; 
frontier models use 2{,}000 tokens (GPT series) or 3{,}000 tokens 
(Claude, Gemini) to accommodate extended chain-of-thought generation 
without truncating reasoning traces. For the LLM-grading evaluation (Sec.~\ref{ssec:evaluation_strategy}), we use GPT-4o-mini. Appendix~\ref{app:exp-config} contains additional notes on experimental configurations. 

\vspace*{-3mm}
\section{Results}
\label{sec:results}
\vspace*{-2mm}


\begin{table*}[t]
\centering
\small
\setlength{\tabcolsep}{2.5pt} 
\begin{threeparttable}
\resizebox{\textwidth}{!}{
\begin{tabular}{lccccccc}
\toprule
\textbf{Model} 
    & \textbf{Prog.}$\uparrow$ 
    & \textbf{Comp.}$\uparrow$ 
    & \textbf{Pos.\ Acc.}$\uparrow$ 
    & \textbf{IoU}$\uparrow$ 
    & \textbf{Fail}$\downarrow$
    & \textbf{\texttt{REMOVE}}$\downarrow$
    & \textbf{Gap}$\downarrow$ \\
\midrule




Gemini-3-Flash    & $\mathbf{0.675}_{{\pm0.052}}$ & $\mathbf{0.716}_{{\pm0.045}}$ & $\mathbf{0.594}_{{\pm0.065}}$ & $\mathbf{0.817}_{{\pm0.034}}$ & $0.625_{{\pm0.125}}$ & $\mathbf{0.196}$ & $\mathbf{ 0.018}$ \\

GPT-4o            & $0.588_{{\pm0.051}}$ & $0.633_{{\pm0.048}}$ & $0.500_{{\pm0.061}}$ & $0.753_{{\pm0.038}}$ & $0.421_{{\pm0.116}}$ & $0.280$ & $ 0.056$ \\

GPT-4o-Mini       & $0.333_{{\pm0.041}}$ & $0.383_{{\pm0.040}}$ & $\underline{0.233}_{{\pm0.047}}$ & $0.531_{{\pm0.041}}$ & $0.550_{{\pm0.114}}$ & $0.432$ & $ 0.254$ \\

GPT-4.1-Mini      & $0.312_{{\pm0.053}}$ & $0.352_{{\pm0.054}}$ & $0.233_{{\pm0.056}}$ & $0.481_{{\pm0.053}}$ & $0.500_{{\pm0.115}}$ & $0.463$ & ${0.388}$ \\

Claude-Sonnet-4.6 & $0.285_{{\pm0.036}}$ & $0.332_{{\pm0.038}}$ & $0.189_{{\pm0.039}}$ & $0.479_{{\pm0.041}}$ & $0.350_{{\pm0.109}}$ & $0.395$ & $0.265$ \\
Gemini-2.5-Flash  & $\underline{0.257}_{{\pm0.030}}$ & $0.279_{{\pm0.033 }}$ & $0.206_{{\pm0.033}}$ & $0.428_{{\pm0.039}}$ & $\mathbf{0.300}_{{\pm0.105}}$ & $0.467$ & $0.402$ \\

Gemini-3.1-Flash-lite   & $0.257_{{\pm0.052}}$ & $\underline{0.273}_{{\pm0.055}}$ & $0.286_{{\pm0.054}}$ & $\underline{0.211}_{{\pm0.0514}}$ & $0.550_{{\pm0.114}}$ & $\underline{0.540}$ & $\underline{0.467}$ \\
\midrule
Mistral-7B        & $\mathbf{0.631}_{{\pm0.053}}$ & $\mathbf{0.673}_{{\pm0.046}}$ & $\mathbf{0.539}_{{\pm0.067}}$ & $\mathbf{0.793}_{{\pm0.033}}$ & $0.500_{{\pm0.129}}$ & $\mathbf{0.124}$ & $\mathbf{-0.124}$ \\
Qwen-7B           & $0.612_{{\pm0.044}}$ & $0.665_{{\pm0.040}}$ & $0.517_{{\pm0.052}}$ & $0.778_{{\pm0.031}}$ & $0.556_{{\pm0.121}}$ & $0.205$ & $-0.116$ \\
Llama-8B          & $0.586_{{\pm0.052}}$ & $0.630_{{\pm0.049}}$ & $0.506_{{\pm0.056}}$ & $0.741_{{\pm0.057}}$ & $0.684_{{\pm0.110}}$ & $0.277$ & $0.080$ \\
Gemma-9B          & $0.578_{{\pm0.044}}$ & $0.628_{{\pm0.039}}$ & $0.483_{{\pm0.054}}$ & $0.751_{{\pm0.033}}$ & $0.600_{{\pm0.112}}$ & $0.122$ & $ -0.084$ \\
Qwen-72B          & $0.557_{{\pm0.049}}$ & $0.606_{{\pm0.042}}$ & $0.461_{{\pm0.064}}$ & $0.733_{{\pm0.036}}$ & $0.421_{{\pm0.116}}$ & $0.245$ & $0.047$ \\

Qwen-14B          & $0.476_{{\pm0.063}}$ & $0.514_{{\pm0.061}}$ & $0.394_{{\pm0.068}}$ & $0.642_{{\pm0.055}}$ & $0.611_{{\pm0.118}}$ & $0.355$ & $0.229$ \\

DeepSeek-Lite     & $0.419_{{\pm0.041}}$ & $0.474_{{\pm0.039}}$ & $0.317_{{\pm0.048}}$ & $0.617_{{\pm0.038}}$ & $\mathbf{0.400}_{{\pm0.112}}$ & $0.148$ & $-0.275$ \\

Qwen-32B          & $0.339_{{\pm0.045}}$ & $0.378_{{\pm0.048}}$ & $0.250_{{\pm0.046}}$ & $0.530_{{\pm0.048}}$ & $\underline{0.850}_{{\pm0.082}}$ & $0.448$ & $\underline{0.374}$ \\
\bottomrule
\end{tabular}}
\end{threeparttable}
\vspace*{-2mm}
\caption{Task performance in \craft{} across frontier models (top) and open-weight models (bottom) across 20 games after turn $t=20$. The Builder receives up to 5
“legal” moves per turn and selects among them based solely on
Director instructions. \textit{Prog.}, \textit{Comp.}, \textit{Pos.\ Acc.}, and \textit{IoU}\
are continuous metrics; \textit{Failed} is a binary per-turn outcome. Subscripted values show Mean$_{{\pm\text{SEM}}}$. \texttt{REMOVE} and Gap are aggregated over all turns. \textbf{Bold} = best within group. \underline{Underline} = worst across all models.
\vspace*{-2mm}
 }
\label{tab:oracle_performance}
\end{table*}


 
  
 \begin{wrapfigure}{l}{0.45\textwidth}
 
    \centering
     \vspace{-4mm}
    \includegraphics[width=\linewidth]{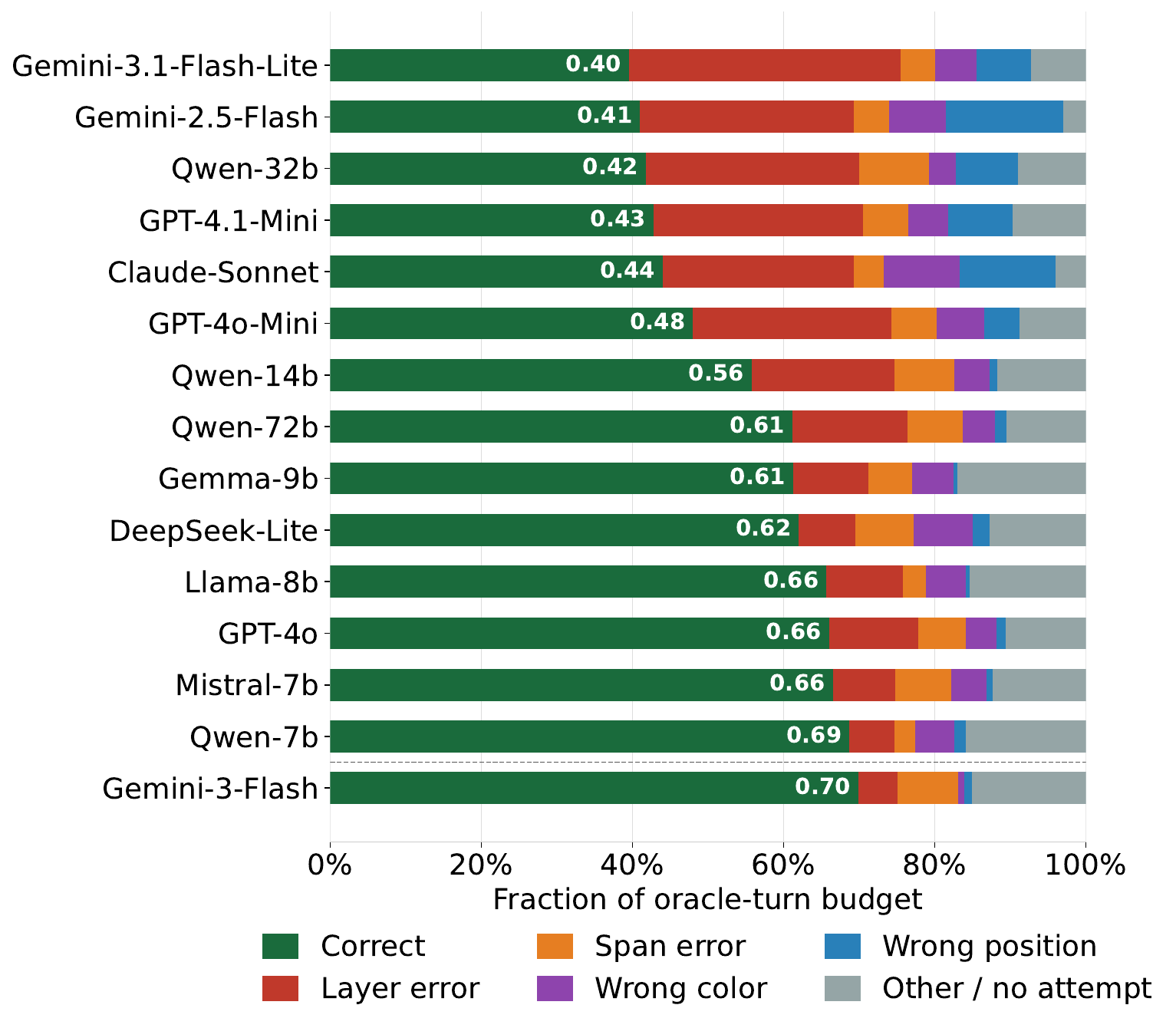}
     \vspace{-2mm}
    \caption{Failure taxonomy over all turns across 15 director models.
  }
    \label{fig:failure_taxonomy}
      \vspace{-4mm}
\end{wrapfigure}

\paragraph{Frontier models do not uniformly dominate open-weight models} Table~\ref{tab:oracle_performance} shows substantial variation within both groups. Gemini-3-Flash leads all 15 models with average progress of $0.675$, but GPT-4.1-Mini ($0.312$), Claude-Sonnet-4.6 ($0.285$), and both other Gemini-Flash variants ($0.257$) fall below most open-weight models. This shows that perspective-based spatial reasoning can be challenging even for frontier models, consistent with prior work~\citep{li2025viewspatialbenchevaluatingmultiperspectivespatial}.

\vspace*{-2mm}
\paragraph{Move-Level Failures}
Fig.~\ref{fig:failure_taxonomy} shows a taxonomy of failures across all 15 
models, computed by replaying saved game logs and classifying each turn in which oracle-verified moves were available\footnote{Turns without valid oracle moves rarely exceed $15\%$ of total turns across all models on average. \Cref{fig:scalar_outcomes} provides details.} by its dominant failure mode. Layer errors (red) dominate across nearly all models, 
confirming that 3D layer inference is the primary move-level 
bottleneck---Directors likely correctly identify target positions but 
specify a vertical layer where the mode is invalid and fails. Wrong position errors (blue) indicate that Directors like Claude-Sonnet and the two lowest-performing Gemini models likely issue instructions that map to incorrect 
board locations. Span errors (orange) are 
elevated for Gemini-3-Flash relative to its otherwise strong 
performance: despite Directors correctly identifying block type and 
position, the Builder fails to infer the required second-cell 
endpoint for large blocks from the Director instructions. 
This is compounded by the partial observability design---a large 
block spanning two cells appears as size~2 to the Director whose 
view contains both cells, but as size~1 to Directors who see only 
one face, making it difficult for Directors to consistently 
communicate domino placement with sufficient precision for the 
Builder to resolve the span 
(see Appendix~\ref{ssec:encoding} for 
details on block encoding and partial view projection).

\vspace*{-2mm}

\paragraph{Correction spirals as a signature of communication failure}
The \textit{remove gap}, or the difference between the fraction of turns where Directors instruct a remove action and the fraction where the oracle prescribes one, separates frontier from open-weight models most clearly and is strongly negatively correlated with oracle adherence ($r = -0.543$, $p < 0.001$), and therefore task progress. 
Fig.~\ref{fig:remove_evolution} plots the \textit{evolution} of builder 
remove actions across turns against oracle-prescribed remove 
actions across all evaluations.\footnote{Full per-model evolution plots for all base and frontier models 
appear in Figs.~\ref{fig:remove_evolution_base} 
and~\ref{fig:remove_evolution_frontier} in 
Appendix~\ref{app:additional_experimental_results}.
} While $95.4\%$ of oracle-prescribed moves are placements, the fraction of turns 
requiring a removal grows sharply after turn 10 as boards accumulate 
errors.
Low-performing models show over-removals early in the game before any board errors have accumulated---indicating that 
over-removal is a proactive Director communication pattern rather 
than a reactive response to mistakes.
This ``correction spiral'' reflects a 
fundamental property of 
the multi-perspective coordination 
requirement: three Directors may simultaneously identify 
different blocks to remove, leaving the Builder to resolve 
conflicting priorities across turns without guaranteed progress.  In contrast, well-performing 
models (Llama-8B, Gemini-3-Flash) track the oracle remove signal 
closely throughout the game. 

\begin{figure}[t!]
  \centering
  \includegraphics[width=.9\linewidth]{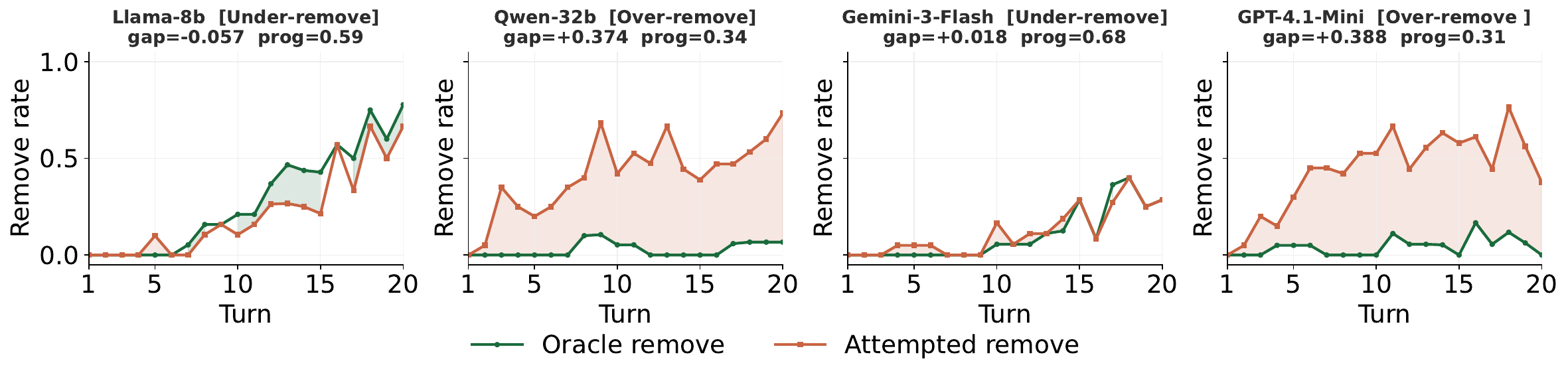}
\vspace*{-2mm}
  \caption{Per turn oracle-prescribed vs.\ attempted remove rate, averaged
           across all 20 structures (shading = {\it remove gap}, or difference between fraction of turns where Directors instruct a remove action and fraction where oracle prescribes one; \textcolor{darkgreen}{\bf green}: model under-removes, \textcolor{saddlebrown}{\bf orange}: model over-removes).
           Subplot titles show mean gap and final-turn task progress. 
\vspace*{-2mm}
      }
  \label{fig:remove_evolution}
\end{figure}

\begin{figure}[t!]
  \centering
  \includegraphics[width=\linewidth]{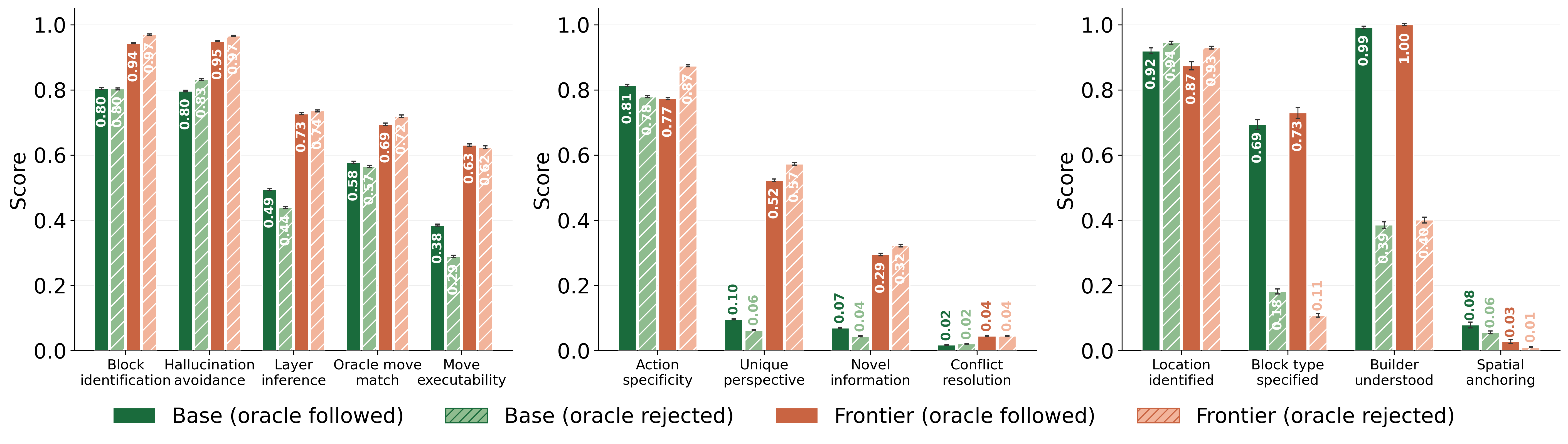}
\vspace*{-2mm}
\caption{LLM grader scores across \textbf{three} evaluation dimensions---\textbf{spatial
grounding} (left), \textbf{mind modeling} (center), and \textbf{pragmatic sufficiency}
(right)---broken down by question and model group. Error bars denote standard error of the mean across all 
structure--turn--director observations per model from 3 independent runs.}

  \label{fig:judge_questions}
\vspace*{-2mm}
\end{figure}

\vspace*{-3mm}
 \subsection{LLM Grader Results}
\label{ssec:llm_judge_results}
\vspace*{-2mm}


Oracle adherence 
and remove behavior characterize \emph{what} goes wrong and \emph{how often}, but do not explain \emph{why}. We therefore further investigate communication failures with LLM graders. 

\vspace*{-2mm}
\paragraph{Individual reasoning and message quality do not distinguish turn or task success} Fig.~\ref{fig:judge_questions} shows automatic grading results\footnote{Detailed results are shown in \Cref{tab:judge_scores} and Fig.~\ref{fig:judge_permodel} in the Appendix.} for \textbf{spatial grounding} (left), \textbf{mind modeling} (middle), and \textbf{pragmatic sufficiency} (right) of Director internal reasoning and public messages, grouped by Builder oracle adherence. This reveals a key turn-level diagnostic: SG and MM scores are nearly identical given oracle adherence condition, indicating that Directors reason and communicate at similar quality regardless of whether the Builder ultimately follows the oracle. In contrast, PS scores drop sharply when the Builder does not execute the oracle-recommended move. This suggests that individual communication quality (SG and MM scores) does not distinguish successful from unsuccessful turns, whereas collective sufficiency does---directly operationalizing the group-level irreducibility established in Sec.~\ref{sec:directors_as_bps}. This is expected: the PS judge evaluates the joint Director output as a whole, which is precisely the object whose sufficiency determines whether the Builder can identify a correct move, regardless of any individual Director's communication quality.


Frontier models score higher on spatial grounding 
($0.829{\pm}0.036$ vs. $0.658{\pm}0.069$) and mind modeling 
($0.642{\pm}0.067$ vs. $0.502{\pm}0.024$), yet achieve lower task 
progress ($0.387$ vs. $0.525$). 
Better individual reasoning and message quality, in other words, does 
not automatically produce better task outcomes. This calls into question the utility of using benchmarks of individual communication for multi-agent tasks; higher-level qualities of the information exchange are more explanatory.
Table~\ref{tab:judge_scores} in Appendix~\ref{app:llm_judge_prompts} reports per-model judge scores with standard errors.

\vspace*{-2mm}
\paragraph{Individual Director Communication vs. Collaborative Task Performance}

We correlated all judge dimensions with progress across all Directors.\footnote{Complete results shown in \Cref{tab:feature_correlations_judge} in Appendix~\ref{app:llm_judge_prompts} and ~\ref{app:mediation_analysis}.} Most show negative correlations with progress, but \textbf{unique perspective utilization (MM5)} stands out: it 
directly measures whether a Director reasons about what is “exclusively visible” from their wall, which is precisely what would lead a Director to identify board corrections invisible to others and instruct removals accordingly. 
Mediation analysis~\citep{mackinnon2007mediation} confirms the likely causal chain. Unique perspective utilization alone explains $R^2=0.330$ of progress variance, remove gap alone explains $R^2=0.609$, and both together explain $R^2=0.633$, a gain of only $0.024$. Controlling for remove gap establishes full mediation: the partial correlation of unique perspective with progress is non-significant ($r=-0.247$, $p=0.374$). \textbf{Directors who carefully leverage their private wall view produce more correction-oriented instructions, which drive over-removal, which halts progress over the 20 turns.} This suggests that the very sophistication that makes frontier Directors better individual communicators makes them relatively worse collective coordinators in \craft{}'s multi-sender setting.

\vspace*{-3mm}
\section{Conclusion}
\label{sec:conc}
\vspace*{-2mm}



\an{
We introduced \textsc{CRAFT}, a multi-agent benchmark for coordination under asymmetric information, grounded in a multi-sender Bounded Pragmatic Speaker (BPS) formulation in which optimal behavior requires calibrating to a \emph{joint} listener. Empirically, we find a clear dissociation between individual communicative competence and collective task success: frontier models score higher on communication metrics yet underperform smaller open-weight models on task completion. This gap reflects a frontier-specific failure mode consistent with our theory---Directors that better leverage their unique perspective produce correction-heavy instructions that satisfy individual ToM objectives but misalign with the joint listener, inducing over-removal and exhausting the turn budget. Open-weight models, despite weaker individual competence, avoid this failure due to less aggressive epistemic updating. No model successfully models the joint 
listener in practice. These results show that individual pragmatic competence is necessary but not sufficient for coordination, and that failure to approximate the joint listener leads to systematic breakdowns. More broadly, \textsc{CRAFT} demonstrates that benchmarks of individual pragmatic reasoning are insufficient for evaluating multi-agent systems: coordination and individual competence are empirically dissociable and can move in opposite directions.}

\vspace*{-2mm}
\paragraph{Limitations and Future Work}
\textsc{CRAFT} is evaluated in a \textit{text-only} setting and does not consider models with direct access to visual inputs~\citep{li2025viewspatialbenchevaluatingmultiperspectivespatial, liu2025spatialreasoningmultimodallarge}. While multimodal inputs may improve individual spatial grounding, it remains unclear whether such gains transfer to multi-agent coordination.
Our experiments also use a fixed Builder with access to oracle-assisted candidate moves. This controlled design isolates the effect of Director communication and enables clearer credit assignment, but does not fully reflect real-world settings where agents must act without access to ground-truth guidance or may need to “explore” possible moves.
We also do not explore settings with heterogeneous Director models (e.g., mixing open and proprietary models within a game)~\citep{davidson2025collaborationgap}. While we empirically validated the credit attributed to each Director in the \textit{joint listener} for the Builder (Eq.~\ref{eqn:joint_tom}) for \craft{} games in Appendix~\ref{app:lambda_consistency_validation}, heterogeneous agents in the loop would naturally require more sophisticated methods of credit assignment from MARL research such as counterfactual credits~\citep{deshmukh2026capocounterfactualcreditassignment}, or more principled approaches like Shapley-Owen values~\citep{shapley1953stochastic, nath2026owen}. Such configurations could provide further insight into how differences in alignment algorithms, pretraining, and post-training data influence collaborative performance and agent behavior, including partner-aware coordination in information-asymmetric settings~\citep{curvo2025traitors, liang2025llmhanabievaluatingmultiagentgameplays, hu2021offbelieflearning, nath2025learning}. While our controlled setup reduces variability and improves interpretability, it limits conclusions about joint optimization in heterogeneous multi-agent systems, where communication and execution policies may co-adapt.



\bibliography{colm2026_conference, rlhf, rlhf_2, nips_deception, nk_cites, nk_hannah_cites}


\appendix

\section{Proofs}
\label{app:proofs}

\begin{proof}[Proof]
We proceed in three steps in order to prove \Cref{thm:craft_bps}.

\paragraph{Step 1: Per-Director optimization.}
Each Director $D_i$ selects their utterance $u_i$ to maximize expected
utility under the joint listener.
Since the joint listener $\tomlistener{\mathrm{joint}}$ evaluates the
full tuple $(u_1, u_2, u_3)$, the optimal utterance for $D_i$ solves:
\begin{align}
    u_i^{\star}
    &= \argmax_{u_i \in \mathcal{U}}\;
    \log \basespeaker{i}(u_i \mid z_i^{\star}, c_i)
    + \log \tomlistener{\mathrm{joint}}
        (z^{\star} \mid u_1, u_2, u_3, c),
\label{eqn:per_director_opt}
\end{align}
where we work in log space for convenience.
This is the standard BPS objective (Eq.~\ref{eqn:bps}) applied to
$D_i$, with $\tomlistener{\mathrm{joint}}$ playing the role of the
ToM listener.

\paragraph{Step 2: Conditional independence of base speakers.}
Each base speaker $\basespeaker{i}$ is parameterized independently and therefore
it conditions on $D_i$'s private context $c_i = (o_{i,t}, h_t,
\{u_{j,t}\}_{j \neq i})$ and intention $z_i^{\star}$, and does not
depend on the utterances or intentions of the other directors except
through the shared conversation history $h_t$ already included in
$c_i$.
Formally, for $i \neq j$:
\begin{align}
    \basespeaker{i}(u_i \mid z_i^{\star}, c_i)
    \;\perp\!\!\!\perp\;
    \basespeaker{j}(u_j \mid z_j^{\star}, c_j)
    \;\Big|\; c,
\label{eqn:independence}
\end{align}
where $c = (h_t, s_t)$ is the shared public context.
Under this conditional independence, the joint distribution over all
three utterances given the shared context factorizes as:
\begin{align}
    p(u_1, u_2, u_3 \mid z^{\star}, c)
    \;=\;
    \prod_{i=1}^{3}
    \basespeaker{i}(u_i \mid z_i^{\star}, c_i).
\label{eqn:joint_factorisation}
\end{align}

\paragraph{Step 3: Deriving the joint policy.}
Combining Eq.~\ref{eqn:joint_factorisation} with the joint listener
$\tomlistener{\mathrm{joint}}$ (Eq.~\ref{eqn:joint_tom}), the optimal
joint policy is obtained by multiplying the factorized base speaker
distribution by the joint listener and normalizing:
\begin{align}
    \pi^{\star}(u_1, u_2, u_3 \mid z^{\star}, c)
    &\;\propto\;
    p(u_1, u_2, u_3 \mid z^{\star}, c)
    \cdot
    \tomlistener{\mathrm{joint}}(z^{\star} \mid u_1, u_2, u_3, c)
    \nonumber \\
    &\;=\;
    \left(\prod_{i=1}^{3}
    \basespeaker{i}(u_i \mid z_i^{\star}, c_i)\right)
    \cdot
    \tomlistener{\mathrm{joint}}(z^{\star} \mid u_1, u_2, u_3, c).
\label{eqn:joint_policy_derived}
\end{align}
This is exactly Eq.~\ref{eqn:craft_bps}, completing the derivation.

To verify that Eq~\ref{eqn:joint_policy_derived} preserves the BPS
form, substitute Eq.~\ref{eqn:joint_tom} for
$\tomlistener{\mathrm{joint}}$:
\begin{align}
    \pi^{\star}(u_1, u_2, u_3 \mid z^{\star}, c)
    &\;\propto\;
    \left(\prod_{i=1}^{3}
    \basespeaker{i}(u_i \mid z_i^{\star}, c_i)\right)
    \cdot
    \exp\!\left(\sum_{i=1}^{3} \lambda_i R_i(u_i, s_t,
    \mathcal{T})\right)
    \nonumber \\
    &\;=\;
    \prod_{i=1}^{3}
    \left[
    \basespeaker{i}(u_i \mid z_i^{\star}, c_i)
    \cdot
    \exp\!\left(\lambda_i R_i(u_i, s_t, \mathcal{T})\right)
    \right],
\label{eqn:per_director_bps_form}
\end{align}
where the last step uses the fact that the exponential of a sum
factorizes into a product of exponentials.
Each factor in Eq.~\ref{eqn:per_director_bps_form} has exactly the
BPS form of Eq.~\ref{eqn:bps}, with $\basespeaker{i}$ as the base
speaker and $\exp(\lambda_i R_i / Z_i)$ as the per-director ToM
listener, confirming that the multi-sender joint policy is a product
of individual BPS policies.
\end{proof}

\begin{lemma}[Single-Sender BPS as Special Case]
\label{lemma:n1_special_case}
\autoref{thm:craft_bps} reduces to the standard single-sender 
BPS (\autoref{def:bps}) under $N=1$.
\end{lemma}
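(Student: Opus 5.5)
The plan is to specialize every object in \autoref{thm:craft_bps} to a single Director and check that Eq.~\ref{eqn:craft_bps} collapses to Eq.~\ref{eqn:bps}. With $N=1$ there is one Director $D_1$, so the joint intention vector becomes $z^\star = z_1^\star$. The private context becomes $c_1 = (o_{1,t}, h_t)$, since the set $\{u_{j,t}\}_{j\neq 1}$ is empty.

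\textbf{Step 1: the factorization.} The product $\prod_{i=1}^{N}\basespeaker{i}(u_i\mid z_i^\star,c_i)$ has a single factor, $\basespeaker{1}(u_1\mid z_1^\star,c_1)$. This makes Step~2 of the proof of \autoref{thm:craft_bps} vacuous: no independence assumption (Eq.~\ref{eqn:independence}) is needed.

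\textbf{Step 2: the joint listener.} By Eq.~\ref{eqn:joint_tom}, the joint listener becomes $\tomlistener{\mathrm{joint}}(z_1^\star\mid u_1,c)\propto \exp(\lambda_1 R_1(u_1,s_t,\mathcal{T}))$. I will argue that this is a legitimate ToM listener in the sense of \Cref{def:bps}. It is a nonnegative score of how well $u_1$ communicates $z_1^\star$ in context, and it matches the RLHF-reward listener of \citet{nguyen2024languagemodelsboundedpragmatic} with $R_\phi=\lambda_1R_1$. Substituting both pieces into Eq.~\ref{eqn:craft_bps} gives $\pi^\star(u_1\mid z_1^\star,c)\propto \basespeaker{1}(u_1\mid z_1^\star,c_1)\cdot\tomlistener{1}(z_1^\star\mid u_1,c)$, which is exactly Eq.~\ref{eqn:bps}.

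\textbf{Main obstacle: the two contexts.} The statement conditions on the public context $c=(h_t,s_t)$, while the base speaker conditions on $c_1$. I plan to resolve this by taking the BPS context in \Cref{def:bps} to be the merged context $\tilde c=(o_{1,t},h_t,s_t)$. Both modules are then measurable functions of $\tilde c$: the base speaker ignores $s_t$, and the listener depends on $(s_t,\mathcal{T})$, with $\mathcal{T}$ fixed within a game.

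\textbf{Normalization.} I also need to check that the proportionality constant depends only on $(z^\star,\tilde c)$ and not on $u_1$. This holds because normalization in both Eq.~\ref{eqn:bps} and Eq.~\ref{eqn:craft_bps} is over utterances only. The identification therefore holds as distributions over $\mathcal{U}$, which completes the reduction.
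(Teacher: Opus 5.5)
Your proposal is correct and follows the paper's proof. The paper also sets $N=1$ so that $c_1=(o_{1,t},h_t,\emptyset)$ and $z^\star=z_1^\star$, collapses the product to the single factor $\basespeaker{1}(u_1\mid z_1^\star,c_1)$, defines the single-sender listener as $\exp(\lambda_1 R_1(u_1,s_t,\mathcal{T}))$, and substitutes into Eq.~\ref{eqn:craft_bps} to recover Eq.~\ref{eqn:bps}. Your merged context $\tilde c=(o_{1,t},h_t,s_t)$ and the normalization check are extra care rather than a different route: the paper simply conditions everything on $c_1$, even though the listener depends on $s_t$.
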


\begin{proof}
In order to show that the single-sender BPS framework is a special case of \Cref{thm:craft_bps}, we consider the setup where $N=1$. The private context of $D_1$ reduces to 
$c_1 = (o_{1,t}, h_t, \emptyset)$ since $\{u_{j,t}\}_{j \neq 1} 
= \emptyset$, and $z^\star = z_1^\star$. The joint ToM listener 
(\autoref{def:joint_tom}) reduces to:
\begin{align}
    \tomlistener{\mathrm{joint}}(z_1^\star \mid u_1, c_1)
    \;\propto\;
    \exp\!\left(\lambda_1 R_1(u_1, s_t, \mathcal{T})\right)
    \;=:\;
    \tomlistener{1}(z_1^\star \mid u_1, c_1),
\end{align}
since all terms $\lambda_i R_i = 0$ for $i > 1$. Substituting 
into \autoref{eqn:craft_bps}, the product over base speakers 
collapses to a single factor:
\begin{align}
    \pi^\star(u_1 \mid z_1^\star, c_1)
    \;\propto\;
    \basespeaker{1}(u_1 \mid z_1^\star, c_1)
    \cdot
    \tomlistener{1}(z_1^\star \mid u_1, c_1),
\end{align}
which is \autoref{eqn:bps} exactly. 
\end{proof}

\noindent \autoref{lemma:n1_special_case} suggests that the single-sender approximation is insufficient whenever Directors hold 
complementary information. In \autoref{fig:director_views}, no single Director observes the full structure: D1 sees the left wall, D2 the top row, D3 the right wall, with overlap only at shared corners. Communicative sufficiency for the Builder therefore requires 
that the \emph{collective} output $(u_1, u_2, u_3)$ covers what no individual Director can specify alone---a condition invisible to 
any single-sender evaluation and only measurable at the group level.

\paragraph{Connection Between Homogeneity of Director Groups and Equal Weighting in joint ToM listener (via $\lambda_i$ in Eq.~\ref{eqn:joint_tom}).}

In general, the weighting coefficients $\lambda_i$ in the joint ToM listener for the Builder agent capture how the Builder aggregates per-Director contributions and may reflect differences in capability, information asymmetry, or task geometry. In heterogeneous settings~\citep{davidson2025collaborationgap}, this can induce asymmetric weighting of these coefficients. In contrast, the homogeneous Director configuration evaluated in \craft{} is “permutation-invariant”: all Directors share the same underlying model and differ only in role and partial observation, and no viewpoint is structurally privileged. Under this symmetry, a consistent aggregation mechanism should assign equal weight to each Director’s contribution---a quality we empirically validate in \Cref{app:lambda_consistency_validation}.

\section{Oracle Settings, Structure Generation and Director View Computation}
\label{app:structure_and_oracle}
\subsection{Oracle Evaluation}
\label{sec:oracle}

\paragraph{Implementation}
The oracle condition operationalizes the reachability 
requirement established in Sec.~\ref{sec:directors_as_bps}: 
at each turn, we enumerate all moves that make verified 
forward progress from $s_t$ toward $\mathcal{T}$, covering three cases: (i)~\emph{placement}---if the current stack is shorter 
than the target, the next required block (including color, size, and for large blocks, the span partner cell) is generated as a candidate;
(ii)~\emph{excess removal}---if the current stack exceeds the target depth, a remove move for the topmost block is generated;
(iii)~\emph{wrong block correction}---if the stacks are the same depth but a block at some layer is incorrect, a remove move is generated for the topmost block if it is 
the wrong one, or for the correct block above it if buried. All candidates are verified by simulating execution against 
a copy of the current game state before inclusion. Only moves that succeed in simulation and produce correct structural placement are retained. \textbf{Up to $N{=}5$ verified candidates} are sampled per turn as the Builder's input observation for the main experiments reported here (\Cref{sec:results}) using a deterministic seed derived from the structure index and turn number, ensuring 
reproducibility across model comparisons. Large block candidates include the span partner cell explicitly, since 
both endpoints must be specified for valid execution.

\paragraph{Builder selection criterion}
Verified candidates are injected into the Builder's prompt 
alongside the Director discussion, presented in a 
lightweight natural language format---for example, 
\texttt{PLACE gs @ (0,1) layer 0} for a small block or 
\texttt{PLACE bl @ (1,0) layer 0 $\to$ (2,0)} for a large 
block spanning two cells. The Builder is instructed to 
select the candidate that it believes at least one Director 
is describing, based on the current turn's Director 
discussion. If no candidate clearly matches, the Builder 
issues a clarification request rather than selecting 
arbitrarily. This preserves the full pragmatic inference 
requirement: the Builder must still interpret Director 
natural language and map it to a specific candidate, 
consistent with its role as a neutral passive listener 
established in Sec.~\ref{subsec:builder}. The Builder's 
confirmation field is extended to include a brief rationale 
identifying which Director(s) were followed and whether 
Directors agreed or conflicted, providing signal for 
subsequent analysis.

\paragraph{Oracle as a local progress signal}
The oracle verifies forward progress from $s_t$ at each 
turn but has no lookahead: a locally correct move is not 
necessarily a strategically good one if it creates 
correction debt downstream. Whether the Builder follows 
oracle depends on whether Directors made the right 
candidate identifiable; whether locally correct moves 
accumulate into global completion depends on whether 
Directors maintained a coherent group strategy across 
turns rather than consuming the turn budget on redundant 
or misdirected instructions. Oracle adherence is 
therefore necessary but not sufficient for task 
completion---and the inversion we observe 
(Sec.~\ref{sec:results}), where frontier models with 
higher individual communication quality achieve lower 
task progress despite high oracle availability 
(mean 0.83--0.92 across all models; 
Fig.~\ref{fig:scalar_outcomes}), is the empirical 
consequence of this: the oracle does not make the task 
easy, it makes Director group coordination the binding 
constraint. Oracle adherence strongly predicts task 
progress ($r = 0.962$, $p < 0.001$), confirming 
reachability as the operative measure of communication 
quality, while no-oracle turns remain rare and uniform 
across model families, reflecting irrecoverable board 
states rather than Director failures.

\subsection{Block Encoding and the World State}
\label{ssec:encoding}

The world is represented as a $3 \times 3$ grid of positions, each identified by a coordinate pair $(i,j)$ where $i,j \in \{0,1,2\}$. Each position holds an ordered stack of blocks, where the stack index corresponds to vertical layer. A block is encoded as a two-character string where the first character denotes color---green, blue, red, yellow, or orange---and the second denotes size: small (\texttt{s}) or large (\texttt{l}). The full set of valid block types is:

\[
\mathcal{B} = \{ \texttt{gs}, \texttt{gl}, \texttt{bs}, \texttt{bl}, \texttt{rs}, \texttt{rl}, \texttt{ys}, \texttt{yl}, \texttt{os}, \texttt{ol} \}
\]

The world state at any point is a function $S : \mathcal{C} \to \mathcal{B}^*$, mapping each coordinate $c \in \mathcal{C} = \{(i,j) \mid i,j \in \{0,1,2\}\}$ to an ordered sequence of blocks (possibly empty), with stacks capped at height 3.

\subsection{Structure Generation}
\label{ssec:generation}

\begin{figure}[h]
  \centering
    \includegraphics[width=\linewidth]{plots/craft_director_views.png}
\caption{Director perspective views for \texttt{structure\_016}, a complex-tier 
structure with 25 total blocks. \textbf{D1} (left column, $j{=}0$) sees a large 
yellow domino spanning $(0,0)$--$(1,0)$ at L2, a small red at $(0,0)$ and large 
yellow domino spanning $(1,0)$--$(2,0)$ at L1, and small orange, red, green blocks 
at L0. \textbf{D2} (top row, $i{=}0$) sees a large orange domino spanning 
$(0,0)$--$(0,1)$ with small red at $(0,2)$ at L0, a large red domino spanning 
$(0,0)$--$(0,1)$ with small red at $(0,2)$ at L1, and small yellow, red, blue at L2. 
\textbf{D3} (right column, $j{=}2$) sees small blue, red, orange at L2, small red, 
yellow, green at L1, and a large red domino spanning $(0,2)$--$(1,2)$ with small red 
at $(2,2)$ at L0. The shared position $(0,0)$ is visible to both D1 and D2, serving 
as the sole grounding anchor between views. The \textbf{Full Grid} minimap shows all 
seven required positions at height~3; the optional positions $(1,1)$ and $(2,1)$ have 
heights 1 and 2 respectively, both topped by green blocks. No Director can reconstruct 
the full structure unilaterally: D1 and D3 observe vertical depth along their 
respective walls but cannot see interior or opposite-wall positions, while D2 has 
exclusive visibility into interior positions but cannot observe vertical structure 
below the topmost block. Notably, the large orange domino spanning $(0,0)$--$(0,1)$ at L0 is seen as a full 
domino by D2 (both cells visible) but appears as a small block to D1, which can only 
see $(0,0)$ and has no visibility into $(0,1)$. Similarly, the large red domino 
spanning $(0,2)$--$(1,2)$ at L0 is seen as a full domino by D3 but appears small to 
D2, which sees $(0,2)$ but not $(1,2)$. \textbf{These cases illustrate how the same physical 
block can have different apparent sizes depending on the observing Director's 
projection, a direct consequence of the partial observability design.}}
\label{fig:example_3d_structure}
\end{figure}

Target structures are generated by a two-stage process: first assigning stack heights to grid positions, then tiling each vertical layer independently with blocks.

\paragraph{Stack Height Assignment}
Grid positions are partitioned into two sets. Seven \emph{required} positions---all positions except $(1,1)$ and $(2,1)$---always receive exactly three layers of blocks. The two \emph{optional} positions, $(1,1)$ and $(2,1)$, receive a height sampled uniformly from $\{0,1,2\}$, independently. This design ensures a dense, consistently tall structure at the periphery of the grid while allowing variable interior depth.

\paragraph{Layer Tiling}
Each layer is tiled independently. For a given layer, the set of positions that require a block at that depth is determined by the height assignments above. These positions are then filled using a mix of small blocks and large blocks. A large block occupies two orthogonally adjacent positions \emph{on the same layer}---forming a domino pair---and is never stacked vertically. Small blocks occupy a single position. For each position, the generator probabilistically attempts to form a domino with an available orthogonal neighbor; if no neighbor is free or the attempt fails, a small block is placed instead. Colors are sampled uniformly from the five available colors. To discourage structurally repetitive configurations, the generator makes a small number of retry attempts to avoid assigning the same block type to the same position on consecutive layers.

\paragraph{Complexity Classification}
Structures are labeled post-hoc by total block count. Structures with at most 22 blocks are labeled \emph{simple}, those with 23--24 blocks are labeled \emph{medium}, and those exceeding 24 blocks are labeled \emph{complex}. Because required positions always contribute 21 blocks (seven positions at three layers each), complexity variation is driven almost entirely by the optional positions and the proportion of large blocks, which can increase the count when domino pairs span positions that would otherwise be unfilled.

\subsection{Director View Projections}
\label{subsec:projections}

Each Director agent is assigned a fixed 2D projection of the 3D world state, capturing a different face of the grid. The three projections are:

\paragraph{D1: Left Column View}
D1 observes positions $(0,0)$, $(1,0)$, and $(2,0)$ across all three vertical layers, corresponding to the left-facing wall of the structure.

\paragraph{D2: Top Row View}
D2 observes positions $(0,0)$, $(0,1)$, and $(0,2)$ across all three vertical layers, corresponding to the far-facing wall.

\paragraph{D3: Right Column View}
D3 observes positions $(0,2)$, $(1,2)$, and $(2,2)$ across all three vertical layers, corresponding to the right-facing wall.

In each view, blocks are presented left-to-right according to the implied physical seating orientation of each Director (see \cite{zhu2026distributedpartialinformationpuzzles}). Each cell in a view is encoded as a color--size pair; empty cells are represented as color \texttt{none}. A large block appears as size 2 only when both cells of its domino span fall within the Director's visible positions; otherwise it appears as size 1, since only one face of the block is visible from that angle.   

\paragraph{Information Coverage}
D1 and D3 share exactly one position, $(0,0)$, providing a single grounding anchor between the two lateral views. D2 is the only Director with visibility into interior positions such as $(1,1)$ and $(2,1)$---the optional positions---making D2 informationally pivotal for structures with non-trivial interior depth. No single Director can reconstruct the full 3D state unilaterally; productive coordination requires each Director to surface the information that the others structurally cannot observe.

\begin{table}[h]
\centering

\small
\setlength{\tabcolsep}{4pt}
\begin{tabular}{lp{7.5cm}cccc}
\toprule
\textbf{Archetype} & \textbf{Description} & \textbf{D1} & \textbf{D2} & \textbf{D3} & \textbf{Total} \\
\midrule
Assertive   
  & Confident and direct; forms hypotheses quickly and shares them, 
    updates when others provide compelling evidence.
  & 75 & 60 & 30 & 165 \\
\addlinespace
Cautious    
  & Methodical and verification-focused; synthesizes others' 
    observations before adding interpretation.
  & 60 & 90 & 75 & 225 \\
\addlinespace
Observant   
  & Notices patterns and anomalies; flags inconsistencies and 
    connects information across directors.
  & 15 & 30 & 90 & 135 \\
\addlinespace
Skeptical   
  & Questions assumptions including its own; probes claims to 
    ensure group correctness, comfortable with uncertainty.
  & 45 & 75 & 75 & 195 \\
\addlinespace
Synthesizer 
  & Integrates all directors' observations into a coherent picture; 
    reconciles contradictions and drives shared understanding.
  & 105 & 45 & 30 & 180 \\
\midrule
\multicolumn{2}{l}{Total} & 300 & 300 & 300 & 900 \\
\bottomrule
\end{tabular}

\caption{\small Director personality archetypes used in \craft{} experiments.
         Each archetype shapes the Director's internal reasoning style and 
         public communication tone. Assignments are deterministic per
         \texttt{(structure\_index, run, director\_id)} ensuring consistency
         across all model evaluations.}
\label{tab:archetypes}
\end{table}

\subsection{Metrics}
\label{subsec:metrics}

Progress toward the target structure $S^*$ is measured after each
successful move and computes four
complementary metrics over the normalized representations of the
current state $S_t$ and target $S^*$.

\paragraph{Intersection over Union (IoU)} For each position
$c \in \mathcal{C}$, let $A_c = \{b \in S_t(c)\}$ and
$B_c = \{b \in S^*(c)\}$ be the multisets of blocks treated as
sets. The IoU score aggregates overlap across all positions:

\[
\text{IoU}(S_t, S^*) = \frac{\sum_{c \in \mathcal{C}} |A_c \cap B_c|}
{\sum_{c \in \mathcal{C}} |A_c \cup B_c|}
\]

This metric is insensitive to block order within a stack and rewards
partial position matches.

\paragraph{Completion Percentage} This metric measures layer-exact
correctness---a block at position $c$ and layer $k$ counts as correct
only if it matches $S^*(c)[k]$:

\[
\text{CP}(S_t, S^*) = \frac{\sum_{c \in \mathcal{C}}
\sum_{k=0}^{|S^*(c)|-1} \mathbf{1}[S_t(c)[k] = S^*(c)[k]]}
{\sum_{c \in \mathcal{C}} |S^*(c)|}
\]

\paragraph{Position Accuracy} A coarser metric that rewards positions
where the set of blocks matches exactly, regardless of layer order:

\[
\text{PA}(S_t, S^*) = \frac{1}{9}
\sum_{c \in \mathcal{C}} \mathbf{1}[\{b : b \in S_t(c)\} =
\{b : b \in S^*(c)\}]
\]

\paragraph{Overall Progress} The scalar summary used for termination
and trend analysis is the unweighted mean of the three metrics:

\[
\text{OP}(S_t, S^*) = \frac{\text{IoU} + \text{CP} + \text{PA}}{3}
\]



\section{Failure Taxonomy Details}
\label{app:failure_taxonomy}

To obtain the failure counts reported in the main paper, we replayed 
all saved game logs from the \craft{} evaluation and applied a 
deterministic taxonomy to every turn containing at least one oracle 
move. Each turn was first checked for a game engine error stored in 
\texttt{progress\_data["error"]}; turns whose error message contained 
the substring ``layer'' or ``span'' were labeled \textit{engine-layer} 
or \textit{engine-span}, and all remaining engine failures were labeled 
\textit{engine-other}.\footnote{Engine errors are not cases where the engine itself threw an unrecoverable error, but rather where the move failure stemmed from the engine rejecting a move for it being physically impossible given the current board context. The error signal serves as the signal of physical failure.} For \textit{engine-clean} turns (those with no game engine error), the attempted move was 
compared against the oracle set at three levels of strictness: a 
mismatch on action or position yielded \textit{wrong-position}; a match 
on position but not block color yielded \textit{wrong-color}; and a 
match on block but not span yielded \textit{wrong-span}. Turns 
satisfying the full oracle match and accepted by the engine were counted 
as \textit{correct}. Counts were normalized by the total number of 
oracle-labeled turns per model.

\begin{table}[t]
\centering
\small
\setlength{\tabcolsep}{5pt}
\begin{tabular}{llcccccc}
\toprule
\textbf{Type}
  & \textbf{Model}
  & \textbf{Correct}
  & \textbf{Layer}
  & \textbf{Span}
  & \textbf{Other}
  & \textbf{Wrong-Color}
  & \textbf{Wrong-Pos} \\
\midrule
\multirow{8}{*}{\rotatebox[origin=c]{90}{Base}}
  & DeepSeek-Lite & .711 & .086 & .086 & ---  & .089 & .026 \\
  & Gemma-9b      & .738 & .120 & .069 & ---  & .066 & .006 \\
  & Llama-8b      & .774 & .119 & .036 & .003 & .062 & .006 \\
  & Mistral-7b    & .759 & .094 & .085 & ---  & .053 & .009 \\
  & Qwen-7b       & .817 & .070 & .034 & ---  & .061 & .018 \\
  & Qwen-14b      & .616 & .209 & .088 & .025 & .051 & .011 \\
  & Qwen-32b      & .453 & .306 & .100 & .014 & .038 & .089 \\
  & Qwen-72b      & .682 & .170 & .081 & .003 & .047 & .017 \\
\cdashline{1-8}
\multirow{7}{*}{\rotatebox[origin=c]{90}{Frontier}}
  & Gemini-3-Flash        & .823 & .061 & .095 & ---  & .009 & .012 \\
  & GPT-4o                & .734 & .132 & .070 & .006 & .045 & .014 \\
  & GPT-4o-Mini           & .513 & .281 & .064 & .024 & .067 & .051 \\
  & Claude-Sonnet         & .449 & .258 & .041 & .020 & .102 & .130 \\
  & GPT-4.1-Mini          & .457 & .297 & .064 & .035 & .056 & .091 \\
  & Gemini-2.5-Flash      & .412 & .284 & .048 & .025 & .075 & .156 \\
  & Gemini-3.1-Flash-Lite & .401 & .365 & .046 & .058 & .056 & .074 \\
\bottomrule
\end{tabular}
\caption{Failure taxonomy for base and frontier models across \craft{} 
         evaluation runs on 20 target structures. Each cell shows the 
         fraction of oracle-available turns; \textbf{Correct} is the 
         full oracle match rate. Engine errors are diagnosed from the 
         game-engine response after the Builder agent makes a move; 
         positional, color, and span errors reflect oracle-level 
         mismatches.}
\label{tab:taxonomy_combined}
\end{table}


\paragraph{Oracle-based Outcome Breakdown}

\begin{figure}[t]
  \centering
  \includegraphics[width=0.7\linewidth]{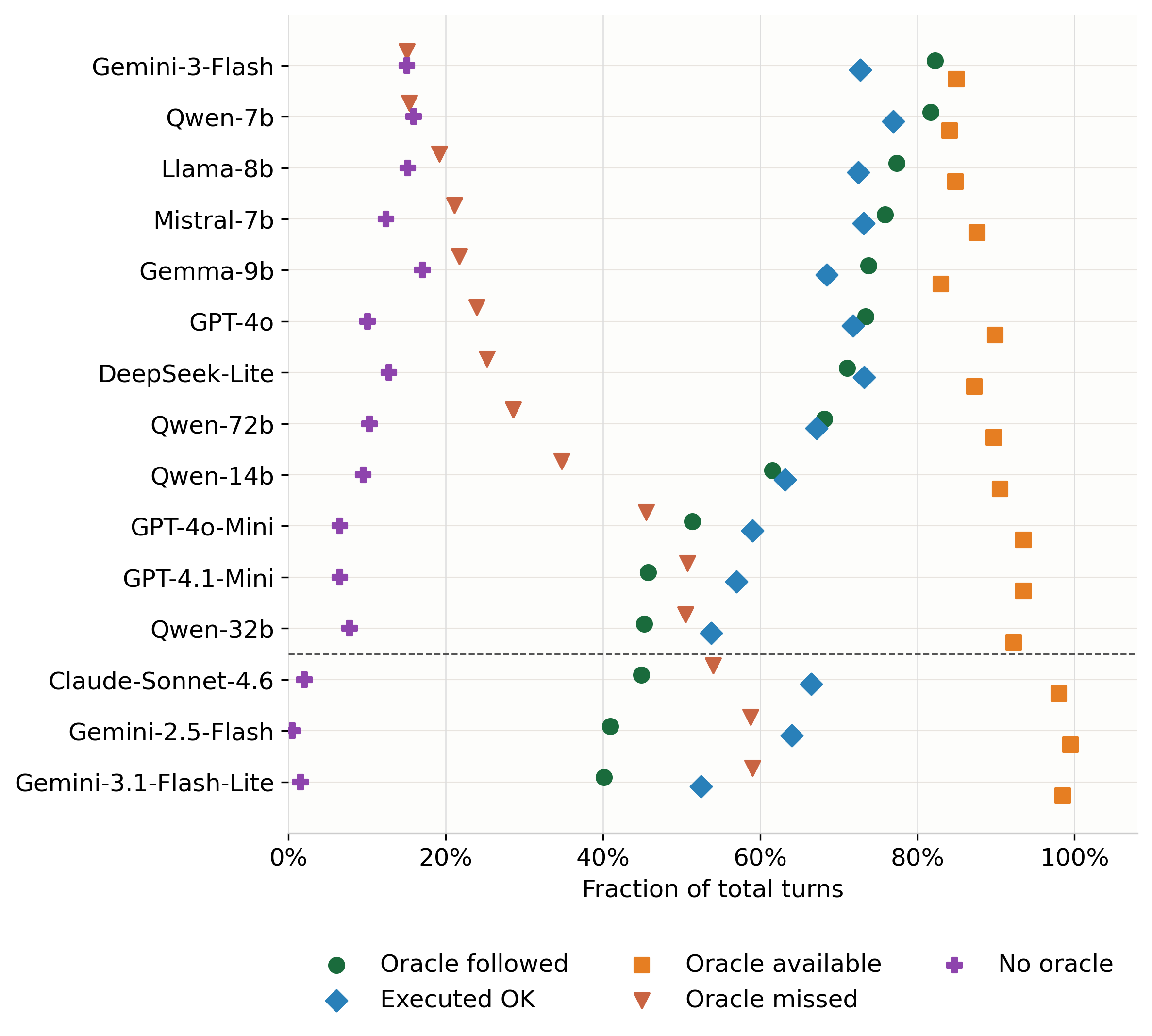}
  \caption{Turn-level outcome rates per model sorted by oracle 
           adherence. Oracle available (orange) is uniformly high; 
           the gap between oracle available and oracle followed (green) 
           directly corresponds to the failure fractions in 
           Table~\ref{tab:taxonomy_combined}. No-oracle turns (purple) 
           are rare across all models ($< 15\%$).}
  \label{fig:scalar_outcomes}
\end{figure}

Fig.~\ref{fig:scalar_outcomes} complements 
Table~\ref{tab:taxonomy_combined} by showing how the turn budget is 
allocated across all five outcome categories relative to total turns 
rather than oracle-available turns. Oracle availability is uniformly 
high ($\geq 85\%$) across all models, confirming that the performance 
differences in the taxonomy table are not confounded by irrecoverable 
board states---the oracle budget was available to be followed in the 
vast majority of turns. The gap between available oracle moves and followed oracle moves in Fig.~\ref{fig:scalar_outcomes} directly corresponds to 
the \textit{correct} bar in Table~\ref{tab:taxonomy_combined}: models 
with small gaps (Gemini-3-Flash, Qwen-7b) show high correct rates in 
the table, while models with large gaps (Claude-Sonnet, 
Gemini-3.1-Flash-Lite) show low correct rates and high layer or 
wrong-position error fractions. The execution success rate (blue 
diamonds) diverges from oracle adherence most sharply for 
Claude-Sonnet-4.6 and Gemini-2.5-Flash, consistent with their 
elevated wrong-position error rates in the taxonomy table---the 
Builder executes moves confidently but at incorrect locations as 
directed.

\begin{table*}[h!]
\centering
\small
\begin{tabular}{r r l p{5cm} l}
\toprule
Time (s) & $\Delta t$ (s) & Speaker & Utterance & Role \\
\midrule
570.0 & -29.6 & Builder & This green? & Clarification query \\
571.1 & -28.5 & Director & Yes. & Confirmation \\
571.3 & -28.3 & Director & Yeah, stick that on top of the blue. & Instruction \\
572.7 & -26.9 & Builder & Right here? & Clarification query \\
572.9 & -26.7 & Director & No, no, no, no. & Correction \\
574.1 & -25.5 & Director & The yellow that we have, for me it is showing green. & Perceptual mismatch \\
577.6 & -22.0 & Director & It's not yellow. & Correction \\
579.3 & -20.3 & Builder & Cause, uh, is it like a, like a darkish green? & Hypothesis proposal \\
583.6 & -16.0 & Director & No. & Rejection \\
584.6 & -15.0 & Director & It's normal green. & Refinement \\
585.9 & -13.7 & Builder & I cannot see any yellow over there. & Perceptual conflict \\
587.2 & -12.4 & Builder & So this is green? & Clarification query \\
588.1 & -11.5 & Director & Yes. & Confirmation \\
588.4 & -11.2 & Director & That's yellow for me. & Misalignment signal \\
589.8 & -9.8 & Builder & This is yellow? & Clarification query \\
590.6 & -8.9 & Director & No. & Correction \\
591.1 & -8.5 & Director & The long. & Referential refinement \\
591.9 & -7.7 & Builder & This is yellow. & Hypothesis proposal \\
592.7 & -6.9 & Director & Yeah, that one's yellow. & Alignment confirmation \\
594.9 & -4.7 & Builder & So this is yellow. & Grounding confirmation \\
596.1 & -3.5 & Builder & Which one's this color? & Clarification query \\
597.3 & -2.2 & Builder & Green? & Hypothesis proposal \\
598.0 & -1.6 & Director & Yeah, the bottom one is green. & Final alignment \\
599.6 & 0.0 & --- & \textbf{Action: REMOVE $\texttt{RS}$ at LAYER 1 } & Execution \\
\bottomrule
\end{tabular}
\caption{Dialogue segment illustrating ambiguity resolution, perceptual misalignment, and multi-agent grounding prior to moving the small red block ($\texttt{RS}$) at layer 1. $\Delta t$ is measured relative to the action at $t=599.6$s from Group 3.}
\label{tab:dialogue}
\end{table*}

\subsection{Qualitative Error Analysis}

\begin{table*}[h!]
\centering
\scriptsize
\setlength{\tabcolsep}{4pt}
\begin{threeparttable}
\begin{tabular}{llp{5.2cm}p{5.2cm}}
\toprule
\textbf{Model} & \textbf{Turn} 
    & \textbf{Director instruction} 
    & \textbf{Failure \& cause} \\
\midrule

\multicolumn{4}{l}{\textbf{F1 --- Wrong block (base speaker failure)}} \\
\cdashline{1-4}
DeepSeek-Lite & \textbf{T8}
    & D1 \& D3 \textit{(identical)}: ``place a large 
      \borange{orange} block spanning (0,1) and (0,2)''
    & Builder places \failbox{\borange{os}} at layer 2; 
      oracle needs \okbox{\bblue{bs}}. 
      Both directors describe target state not current state;  
      redundant identical messages add zero information. \\
DeepSeek-Lite & \textbf{T1}
    & D2 \& D3: ``large \borange{orange} block spanning 
      middle and right of my bottom layer''
    & Builder places \borange{ol} at \texttt{(1,2)} 
      $\rightarrow$ \failbox{\texttt{(2,2)}}; oracle needs 
      \borange{ol} at \texttt{(0,0)} span \texttt{(0,1)}. 
      Frame-of-reference ambiguity: ``middle and right'' 
      resolves to wrong global coordinates. \\

\midrule
\multicolumn{4}{l}{\textbf{F2 --- Correction spiral (ToM listener failure)}} \\
\cdashline{1-4}
Qwen-32B & \textbf{T13}
    & D1: ``remove the large \borange{orange} block from 
      the middle-left of my \textit{bottom} layer'';
      D2: ``remove the \borange{orange} block from my 
      \textit{bottom} left corner''
    & \failbox{\texttt{Cannot remove layer 0 at (1,0)}} 
      --- \borange{ol} sits at layer~1 not layer~0; 
      both directors specify ``bottom layer'' without 
      checking current stack depth. Oracle recommends 
      \okbox{\texttt{place gs @ (0,0) layer 2}}. \\
 \\
Qwen-32B & \textbf{T14--15}
    & D1 \& D3: same remove instruction as T13
    & \textit{Identical error, identical board state.} 
      Three consecutive turns consumed; no director issues prerequisite 
\okbox{\texttt{remove (1,0) layer 1}} needed to unblock target. \\

\midrule
\multicolumn{4}{l}{\textbf{F3 --- Span omission (reasoning--communication gap)}} \\
\cdashline{1-4}
Gemini-3-Flash & \textbf{T10}
    & D3: ``put a \textit{small} \byellow{yellow} block 
      on my bottom right''
    & Builder places \byellow{yl} without 
      \failbox{\texttt{span\_to}}; oracle needs 
      \byellow{yl} at \texttt{(1,0)} span \texttt{(2,0)}. 
      Director said small; builder upgraded to large 
      but omitted span endpoint. \\
Gemini-3-Flash & \textbf{T6}
    & D3 message truncated; D2 silent
    & Builder places \bblue{bl} at correct position but 
      \failbox{\texttt{span\_to=None}}; oracle needs 
      span \texttt{(2,2)}. No span context in messages. \\

\midrule
\multicolumn{4}{l}{\textbf{F3 --- Layer miscounting (stacking constraint violation)}} \\
\cdashline{1-4}
Qwen-32B & \textbf{T7}
    & D2: ``swap out the \byellow{yellow} block 
      at bottom left''
    & Stack at \texttt{(0,0)}: \texttt{[ys,\,ol]}. 
      Builder attempts \failbox{\texttt{remove layer 0}}; 
      must remove \borange{ol} at layer~1 first. 
      Director describes target without checking 
      current stack depth. \\
Claude-Sonnet-4.6 & \textbf{T8}
    & D1: ``place small \bgreen{green} at the near end 
      of my left wall, \textit{second level}''
    & Stack depth at \texttt{(0,0)} is 2; correct layer is \okbox{\texttt{layer=2}} but builder 
maps ``second level'' to \failbox{\texttt{layer=1}}. Natural language level 
      indexing misaligns with zero-indexed stack depth. \\

\bottomrule
\end{tabular}
\begin{tablenotes}
\small
\item \failbox{\phantom{xx}} wrong value\quad
      \okbox{\phantom{xx}} correct oracle value.\quad
      Block codes: \borange{orange}, \bblue{blue}, 
      \bgreen{green}, \byellow{yellow}, \bred{red}.
 
\end{tablenotes}
\end{threeparttable}
\caption{Representative turn-level failures organized by BPS failure 
mode, as identified in Sec.~\ref{sec:directors_as_bps}. The Qwen-32b deadlock (Fig.~\ref{fig:qwen32b_case_detailed}) across 
turns 10--15 illustrates how a single wrong early placement traps 
directors in an irrecoverable correction spiral when no agent tracks 
the repair plan across turns.}
\label{tab:error_cases}
\end{table*}


\begin{figure*}[h!]
    \centering
    \includegraphics[width=\linewidth]{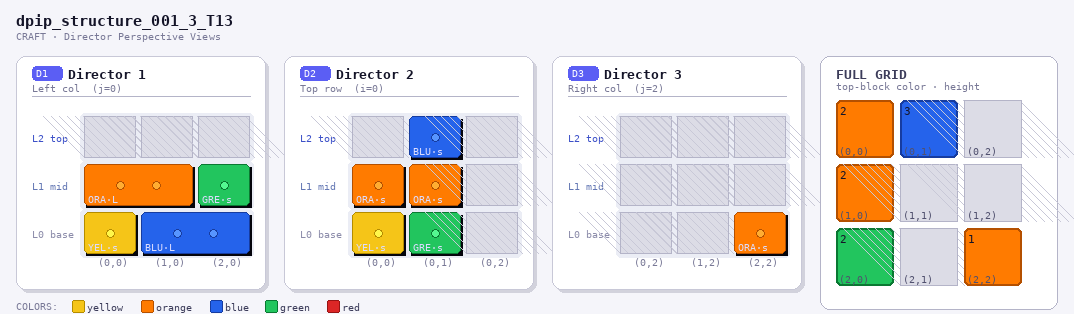}
    \caption{Three turns of zero progress in Qwen-32b (\texttt{structure\_001}, T10--T14): D1 and D2 repeatedly instruct removal from the wrong layer while the oracle recommends forward placement, and the Builder follows Director consensus over oracle, consuming three turns without any board state change (full trace in \Cref{tab:qwen32b_trace}).}
\label{fig:qwen32b_case_main}
    
\end{figure*}

We provide an example error analysis from the game runs for the Qwen-32b model as directors. \Cref{tab:error_cases}  shows representative samples of errors from multiple open-weight and frontier-proprietary models, broken down by the Director model, turn of error, Director instructions for that turn and failure details. Fig.~\ref{fig:qwen32b_case_main} shows the board state at turn~13 
of the Qwen-32b correction spiral. Reading D1's panel from bottom 
to top: layer~0 shows a small yellow block at \texttt{(0,0)} and 
a large blue domino spanning \texttt{(1,0)--(2,0)}; layer~1 shows 
a large orange domino spanning \texttt{(0,0)--(1,0)} and a small 
green at \texttt{(2,0)}; layer~2 is entirely empty. The orange 
domino D1 wants to remove therefore sits at layer~1, directly 
accessible as the top block at both \texttt{(0,0)} and 
\texttt{(1,0)}. However, D1's instruction says ``remove the large 
orange block from the middle-left of my \emph{bottom} layer''---specifying layer~0, which is occupied by the blue domino, not the 
orange one. The game engine rejects the move with 
\texttt{Cannot remove layer~0 at (1,0)---must remove top block 
first (layer~1)}. D2 makes the same error independently: it sees 
orange at layer~1 on \texttt{(0,0)} but instructs removal from 
the bottom corner, again specifying layer~0. D3 sees only a single 
orange block at \texttt{(2,2)} and has \textit{no} visibility into either 
problem position. Critically, this board state and these Director 
instructions are \emph{identical} across turns 13, 14, and 15---because neither the board nor the Directors' instructions change 
after a failed remove attempt. Without any mechanism to track that 
the previous instruction failed and why, Directors re-issue the 
same ``bottom layer'' instruction each turn, consuming three turns 
without any forward progress while the oracle continues to 
recommend placement moves at other positions.

\clearpage
\section{Agent-Specific Prompts}
\label{app:agent_prompts}
We provide detailed prompts used in our experiments for both the Director (Figs.~\ref{fig:director_initial_prompt}--\ref{fig:director_initial_prompt3}) and Builder agents (Figs.~\ref{fig:builder_initial_prompt}--\ref{fig:builder_initial_prompt3}) in this section.   

\begin{figure*}[h!]
\begin{tcolorbox}[title={Director Prompt (I): Identity and Perspective}]
\scriptsize
You are Director \{D\_i\} in a collaborative LEGO construction task. You are sitting around a physical board with a Builder and two other Directors. From where the builder sits, D1 is to their left, D2 is across from them, and D3 is to their right. \\

YOU ARE \{archetype\}. \\

\textbf{YOUR PERSONALITY:} \\
\{personality\} \\

\textbf{VERY IMPORTANT:} You must adopt this personality in both your internal reasoning and your public utterances. \\

\textbf{YOUR PERSPECTIVE:} \\
From left to right, you see the following cells across all layers: \\
\{perspective\_description\} \\

\textbf{SPATIAL ORIENTATION (use only in your thinking)} \\
The coordinate grid from above: \\
\quad (0,0) (0,1) (0,2) \quad $\leftarrow$ this is the "far" / "back" row \\
\quad (1,0) (1,1) (1,2) \\
\quad (2,0) (2,1) (2,2) \quad $\leftarrow$ this is the "near" / "front" row \\
Large blocks span SIDEWAYS or FORWARD/BACK --- never stacked vertically. \\

\textbf{HOW TO INTERPRET YOUR TARGET VIEW} \\
- IMPORTANT: In the JSON, keys are named row\_0/row\_1/row\_2, but they refer to LAYERS (vertical stack depth), not grid rows. \\
- row\_0 = layer\_0 (bottom layer / stack depth 0) \\
- row\_1 = layer\_1 (middle layer / stack depth 1) \\
- row\_2 = layer\_2 (top layer / stack depth 2) \\
- in each layer, blocks are listed from LEFT to RIGHT according to YOUR VIEW \\
- In your PUBLIC message, say "bottom layer / middle layer / top layer" (avoid saying "bottom row"). \\
- color=none means that cell should be empty \\
- size of 1 = the block is a small block, size of 2 = the block is a large block and spans two adjacent cells \\
- if two adjacent cells in your target view have the same color and BOTH are size 2, this means that a SINGLE large block occupies both those cells \\

\textbf{EXAMPLE ANALYSIS OF TARGET VIEW AND BOARD STATE} \\

D2's target view: \\
{\tt
"D2": \{ \\
\quad "row\_0": [ \\
\quad\quad \{ "color": "blue", "size": 1 \}, \\
\quad\quad \{ "color": "orange", "size": 2 \}, \\
\quad\quad \{ "color": "orange", "size": 2 \} \\
\quad ], \\
\quad "row\_1": [ \\
\quad\quad \{ "color": "yellow", "size": 1 \}, \\
\quad\quad \{ "color": "yellow", "size": 1 \}, \\
\quad\quad \{ "color": "orange", "size": 1 \} \\
\quad ], \\
\quad "row\_2": [ \\
\quad\quad \{ "color": "yellow", "size": 1 \}, \\
\quad\quad \{ "color": "blue", "size": 1 \}, \\
\quad\quad \{ "color": "green", "size": 1 \} \\
\quad ] \\
\}
} \\

\end{tcolorbox}
\caption{Director prompt used in the experiments, section I. Note that personality and archetypes are populated from \Cref{tab:archetypes}.}
\label{fig:director_initial_prompt}
\end{figure*}

\begin{figure*}[h!]

\begin{tcolorbox}[title=Director Prompt (II): Board State and Analysis]
\scriptsize
 Current board state: \\
\texttt{\{ (0,0):[], (0,1):[], (0,2)[]; (1,0):[], (1,1):[], (1,2):[]; (2,0):[], (2,1):[], (2,2):[] \}} \\

Correct D2 analysis: \\
{\tt
[From my perspective, the current board state has all cells empty. 
My target view specifies that layer 0 should have a blue small block 
in my bottom left corner (0,0), and then a large orange block spanning 
the middle and right cells (0,1) and (0,2). 

Going left to right, layer 1 should have two small yellow blocks at 
(0,0) and (0,1), and a small orange block at (0,2). 

Finally, layer 2 should consist of a yellow small block at (0,0), 
a blue small block at (0,1), and a green small block at (0,2).

To start, I need the builder to place a large orange block spanning 
(0,1) and (0,2), which are the middle and right cells of my bottom layer. 
This is the first action to align with my target view.]
} \\

Correct D2 utterance based on this analysis: \\
{\tt
[Put a large orange block across the middle and the right side of my bottom layer.]
} \\
\textbf{YOUR JOB:} \\
Help the builder complete the structure by giving clear and correct instructions based on your private view. \\
\textbf{RULES FOR REASONING (use only in your thinking):} \\
- Carefully compare your target view with the current board state. \\
- Identify missing blocks, incorrect blocks, or incorrectly placed blocks. \\
- Determine the correct color, size, and layer for each required block. \\
- Respect physical constraints: blocks must be placed on valid support and large blocks must span correctly. \\
- Plan instructions that move the current board closer to your target view. \\
- Do not assume access to other directors' views; reason only from your own perspective. \\

\end{tcolorbox}
\caption{Director prompt used in the experiments, section II.}
\label{fig:director_initial_prompt2}
\end{figure*}

\begin{figure*}[h!]

\begin{tcolorbox}[title={Director Prompt (III): Speaking Rules, Examples, Inputs, and Response Format}]
\scriptsize

\textbf{RULES FOR SPEAKING (in your public message):} \\
- Give clear, concise, and actionable instructions to the builder. \\
- Use natural language descriptions (e.g., “left”, “right”, “middle”) based on your perspective. \\
- Refer to layers as “bottom”, “middle”, or “top” (not row numbers). \\
- Specify block color, size, and placement clearly. \\
- Avoid mentioning coordinates or JSON-style representations. \\
- Do not include internal reasoning in your message. \\
- Focus on one step or a small number of steps that the builder can execute reliably. \\
\textbf{EXAMPLE UTTERANCES:} \\
\{examples\} \\

\textbf{CURRENT BOARD STATE:} \\
\texttt{\{current\_board\_state\}} \\

\textbf{TARGET VIEW:} \\
\texttt{\{target\_view\}} \\

\textbf{CONVERSATION HISTORY:} \\
\texttt{\{conversation\_history\}} \\

\textbf{RESPONSE FORMAT:} \\
Return your response in the following format: \\

\begin{verbatim}
<analysis>
[Your internal reasoning here]
</analysis>
<message>
[Your instruction to the builder here]
</message>
\end{verbatim}

\end{tcolorbox}
\caption{Director prompt used in the experiments, section III.}
\label{fig:director_initial_prompt3}
\end{figure*}

\begin{figure*}[h!]
 
\begin{tcolorbox}[title={Builder Prompt (I): Role, Perspective, Action Space and Environment Rules}]
\scriptsize
You are a Builder in a collaborative LEGO construction task. \\

The three Directors (D1, D2, and D3) have to instruct you to build a single structure that is consistent with the private views of the structure they have. \\
Your job is to place, move, or remove blocks on the board to build the structure. \\
From a top-down view of the target structure, D1's private view is of the left wall of the structure, D2's view is of the top wall of the structure, and D3's view of the right wall of the structure. \\
From where the builder sits, D1 is to their left, D2 is across from them, and D3 is to their right. \\

\textbf{SPATIAL ORIENTATION (use only in your thinking)} \\
The coordinate grid from above: \\
\texttt{(0,0) (0,1) (0,2)} \quad $\leftarrow$ this is the "far" / "back" row \\
\texttt{(1,0) (1,1) (1,2)} \\
\texttt{(2,0) (2,1) (2,2)} \quad $\leftarrow$ this is the "near" / "front" row \\
Large blocks span SIDEWAYS or FORWARD/BACK --- never stacked vertically. \\

\textbf{DIRECTOR PERSPECTIVE GUIDE:} \\
D1: From left to right, sees cells (0,0), (1,0), (2,0) across all layers. \\
D2: From left to right, sees cells (0,0), (0,1), (0,2) across all layers. \\
D3: From left to right, sees cells (0,2), (1,2), (2,2) across all layers. \\

When interpreting the instructions from D1, D2, or D3, you MUST adopt the frame of reference of the speaker. \\
For instance, to D1, "my bottom left corner" is coordinate (0,0) at layer 0 and "my top right corner" is coordinate (2,0) at layer 2. \\
To D2, "my bottom left corner" is coordinate (0,0) at layer 0 and "my top right corner" is coordinate (0,2) at layer 2. \\
To D3, "my bottom left corner" is coordinate (0,2) at layer 0 and "my top right corner" is coordinate (2,2) at layer 2. \\

\textbf{EXAMPLE FRAME OF REFERENCE ANALYSIS:} \\

\textbf{Given board state} \\
\begin{verbatim}
{
    "(0,0)": [],
    "(0,1)": [],
    "(0,2)": [],
    "(1,0)": [],
    "(1,1)": [],
    "(1,2)": [],
    "(2,0)": [],
    "(2,1)": [],
    "(2,2)": []
}
\end{verbatim}
\textbf{Given utterance} \\
\texttt{[D1: Could you please place a small orange block in my bottom left corner?]} \\
\textbf{Correct move} \\
\texttt{[PLACE:os:(0,0):0:CONFIRM:Placing small orange block at bottom-left of D1's side as requested]} \\
\textbf{Given utterance} \\
\texttt{[D2: Please remove the large orange block from my bottom left and middle cells.]} \\
\textbf{Correct move} \\
\texttt{[REMOVE:(0,0):0:(0,1):CONFIRM:Removing the large orange block from bottom-left+bottom-middle of D2's side as requested]} \\
\textbf{Given utterance} \\
\texttt{[D3: Let's begin by placing a large green block across the left and middle cells of my bottom layer.]} \\
\textbf{Correct move} \\
\texttt{[PLACE:gl:(0,2):0:(1,2):CONFIRM:Placing large green block across the left and middle cells of D3's bottom layer as requested]} \\

\end{tcolorbox}
\caption{Builder prompt used in the experiments, section I.}
\label{fig:builder_initial_prompt}
\end{figure*}

\begin{figure*}[h!]
\begin{tcolorbox}[title={Builder Prompt (II): Board State, Oracle Candidates, and Execution Rules}]
\scriptsize
Positions invisible to ALL directors: \texttt{(1,1)} and \texttt{(2,1)}. \\
A large block that is visible to ANY of the directors CANNOT span EITHER \texttt{(1,1)} or \texttt{(2,1)}. \\
--- only inferred from what's missing in other views. \\
\textbf{CURRENT BOARD STATE:} \\
\begin{verbatim}
{json.dumps(current_state, indent=2)}
\end{verbatim}

\textbf{AVAILABLE BLOCKS:} \\
\begin{verbatim}
{', '.join(available_blocks)}
\end{verbatim}

\textbf{BLOCK REFERENCE:} \\
\begin{verbatim}
{block_reference}
\end{verbatim}

\textbf{COORDINATE REFERENCE:} \\
\begin{verbatim}
{coordinate_reference}
\end{verbatim}

\textbf{CANDIDATE MOVES (verified physically valid for this turn):} \\
\begin{verbatim}
{oracle_section}
\end{verbatim}

\textbf{DIRECTOR DISCUSSION:} \\
\begin{verbatim}
{director_discussion}
\end{verbatim}

\textbf{DECISION RULE:} \\
If 2+ directors agree on a block or position, do that first. \\
If all three disagree, pick the most specific instruction. \\

\textbf{STACKING RULES:} \\
- ``layer'' means stack depth, NOT grid row. \\
- ALWAYS calculate layer from CURRENT BOARD STATE, never trust director-specified layers. \\
- Before ANY place: count blocks at target position from CURRENT BOARD STATE. \\
$\rightarrow$ You MUST place new blocks one layer above the number of blocks at that position (e.g., if position \texttt{(0,1)} has \texttt{['gs', 'ol']} $\rightarrow$ next block goes at layer 2; if position \texttt{(0,1)} has \texttt{['']} $\rightarrow$ next block goes at layer 0). \\
- Before ANY remove: verify position is non-empty in CURRENT BOARD STATE. \\
$\rightarrow$ If empty, do NOT attempt removal --- tell directors and suggest placing instead. \\

\textbf{FRAME OF REFERENCE RULE:} \\
IMPORTANT: When choosing where to place a block, you MUST adopt the frame of reference of the director whose instruction you are following. \\
REMINDER: ``The left'' of D1's view is coordinate \texttt{(0,0)} and ``the right'' is coordinate \texttt{(2,0)}. \\
``The left'' of D2's view is coordinate \texttt{(0,0)} and ``the right'' is coordinate \texttt{(0,2)}. \\
``The left'' of D3's view is coordinate \texttt{(0,2)} and ``the right'' is coordinate \texttt{(2,2)}. \\
NEVER deviate from these frames of reference when executing instructions. \\

\textbf{LARGE BLOCK RULE:} \\
Large blocks span TWO adjacent cells --- you MUST specify both endpoints. \\

To choose \texttt{span\_to}: \\
- Identify the TWO director-relative cells explicitly referenced (e.g., ``left+middle'', ``middle+right'', ``bottom left+bottom middle''). \\
- Convert those two cells into global coordinates using the DIRECTOR PERSPECTIVE GUIDE. \\
- Ensure BOTH cells lie on the correct wall for that director. \\
- Set \texttt{position} to one endpoint and \texttt{span\_to} to the other endpoint. \\
- Before outputting, verify: \\
\qquad (a) \texttt{position} and \texttt{span\_to} are orthogonal neighbors, \\
\qquad (b) both endpoint stacks have the SAME height (so placement/removal is on the same layer), \\
\qquad (c) neither endpoint is an invisible cell (\texttt{(1,1)} or \texttt{(2,1)}). \\

\end{tcolorbox}
\caption{Builder prompt used in the experiments, section II.}
\label{fig:builder_initial_prompt2}
\end{figure*}

\begin{figure*}[h!]
\begin{tcolorbox}[title={Builder Prompt (III): Span Resolution, Failure Handling, and Output Format}]
\scriptsize
NEVER place OR remove a large block if \texttt{span\_to} is \texttt{None} --- it will always fail. \\
\textbf{Format:} \\
\texttt{PLACE:block:position:layer:span\_to:CONFIRM:reason} \\
\textbf{Example:} \\
\texttt{PLACE:gl:(0,0):0:(1,0):CONFIRM:Placing large green block across the left and middle cells of D1's bottom layer as requested} \\
If a director says ``green large in the corner'', you must figure out which two adjacent cells it spans from the CURRENT BOARD STATE. \\
NEVER place OR remove a large block if \texttt{span\_to} is \texttt{None} --- it will always fail. \\
If you try to remove a large block, you MUST check the board state to see where spans contain the same block. \\
\textbf{EXAMPLE SPAN ANALYSIS:} \\
\textbf{Given board state} \\
\begin{verbatim}
\texttt{\{
(0,0): [], (0,1): [], (0,2): []; \\
(1,0): [], (1,1): [], (1,2): []; \\
(2,0): [], (2,1): ["gl"], (2,2): ["gl"]
\}}
\end{verbatim}
\textbf{Given raw move} \\
\texttt{[REMOVE:(2,2):0:CONFIRM:Removing the large green block from the bottom layer as requested by D3.]} \\
\textbf{Correct move} \\
\texttt{[REMOVE:(2,2):0:(2,1):CONFIRM:Removing the large green block from the bottom layer as requested by D3.]} \\
\textbf{WHEN MOVES FAIL:} \\
- Explain WHY: e.g., ``I can't remove any block from the middle cell on the bottom layer. There is no block there. Suggest placing [block] instead.'' \\
- Never silently retry the same failed move \\
\textbf{BEFORE PLACING:} \\
Think step by step to make sure that you have interpreted the instructions, including block color and size, and the director's frame of reference, correctly. Do not place a block at the same place where you have previously removed a block of the same color. Count blocks at target position from CURRENT BOARD STATE. \\
\textbf{EXAMPLE BLOCK COUNT AT TARGET POSITION:} \\
\textbf{Given board state:} \\
\begin{verbatim}
\texttt{\{
(0,0): ["os"], (0,1): [], (0,2): []; \\
(1,0): [], (1,1): [], (1,2): ["bl"]; \\
(2,0): ["gl","bl"], (2,1): ["gl","bl"], (2,2): ["bl"]
\}}
\end{verbatim}

\textbf{Given raw move} \\
\texttt{[PLACE:gl:(2,2):0:(2,1):CONFIRM:Placing large green block across the left and middle cells of D3's bottom layer as requested.]} \\

\textbf{Correct move} \\
\texttt{[PLACE:gl:(2,2):2:(2,1):CONFIRM:Placing large green block across the left and middle cells of D3's bottom layer as requested.]} \\

\textbf{OUTPUT FORMAT --- Choose ONE of these exact formats:} \\

1. To place small block: \\
\texttt{PLACE:block\_code:position:layer:CONFIRM:interpretation} \\
Example: \texttt{PLACE:bs:(0,0):0:CONFIRM:Placing blue small block at bottom-left of D1's side as requested} \\

2. To place large block: \\
\texttt{PLACE:block\_code:position:layer:span\_to:CONFIRM:interpretation} \\
Example: \texttt{PLACE:gl:(0,0):0:(1,0):CONFIRM:Placing large green block across left and middle cells of D1's bottom layer} \\

3. To remove small block: \\
\texttt{REMOVE:position:layer:CONFIRM:interpretation} \\
Example: \texttt{REMOVE:(1,2):0:CONFIRM:Removing the block from middle-right of D3's side as requested} \\

4. To remove large block: \\
\texttt{REMOVE:position:layer:span\_to:CONFIRM:interpretation} \\
Example: \texttt{REMOVE:(2,2):0:(2,1):CONFIRM:Removing large green block from D3's bottom layer as requested} \\
NOTE: REMOVE never includes block code --- do NOT write \texttt{REMOVE:bl:(0,0):...} \\

5. To clarify: \\
\texttt{CLARIFY:your specific question} \\
Example: \texttt{CLARIFY:Which blue block should I move - the one on top or bottom?} \\

Always include CONFIRM section to show what you understood from their instructions. \\

\end{tcolorbox}
\caption{Builder prompt used in the experiments, section III.}
\label{fig:builder_initial_prompt3}
\end{figure*}

\begin{figure*}[h!]
\begin{tcolorbox}[title={Builder Prompt (IV): Tool Calling and Move Exploration}]
\scriptsize
\textbf{TOOL MODE --- \texttt{simulate\_move} available (\texttt{\{max\_simulations\}} calls max):} \\

\textbf{WORKFLOW:} \\
1. Simulate each director's instruction once directly and literally. \\
2. Pick the result with greatest value for \texttt{"progress"}. \\
3. Submit that exact move as your FINAL answer. DO NOT INVENT NEW MOVE AFTER SIMULATING. \\
4. If a sim fails (\texttt{ok=False}) $\rightarrow$ fix ONLY the field the hint specifies, retry once. \\
5. NEVER submit a move that returned \texttt{ok=False}. \\
6. NEVER submit a remove move where simulate shows \texttt{structurePlacement=False} --- even if it's the only \texttt{ok=True} simulation. In that case, \texttt{CLARIFY} instead. \\
7. NEVER clarify just because directors disagree --- simulate and pick the best. \\
8. NEVER remove a block where simulate shows \texttt{structurePlacement=False} for that remove. \\

\end{tcolorbox}
\caption{Builder's More Exploration Tool Call Prompt. Note that although \craft{} provides this facility, this is not explored in our current benchmark, in favor of oracle moves in the Builder's observation space in order to restrict the action space of the Builder for controlled experiments. }
\end{figure*}
\vfill

\clearpage
\section{LLM Judge Prompts and Experiments}
\label{app:llm_judge_prompts}

\paragraph{Spatial Grounding Judge (SG)}
Evaluates the Director's private \texttt{<think>} block in isolation, 
assessing whether $\basespeaker{i}$ correctly identifies missing blocks, 
respects stacking constraints, and produces reasoning that corresponds 
to at least one oracle-correct move (Figure~\ref{fig:spatial_judge_prompt}). 
Diagnoses F1 (limited search).

\begin{figure*}[h!]
\centering
\begin{tcolorbox}[
  colback=gray!6!white,
  colframe=black,
  colbacktitle=gray!25!white,
  coltitle=gray!70!black,
  title={\textbf{Spatial Grounding Judge (SG)}},
  fonttitle=\small,
  arc=4pt,
  boxrule=0.5pt,
]
\small
Given the director's \textbf{private target view}, \textbf{current board state}, \textbf{oracle correct moves}, and \textbf{internal reasoning}, evaluate the spatial grounding quality of the director's reasoning.

\medskip
\begin{description}[leftmargin=2em, labelwidth=2em, labelsep=0.5em]
  \item[SG1.] Does the internal reasoning correctly identify at least one block missing from this director's visible wall?
  \item[SG2.] Does the reasoning avoid describing blocks or positions already correctly placed on the board?
  \item[SG3.] Does the reasoning reference the correct layer for the missing block, accounting for what is already stacked?
  \item[SG4.] Does the reasoning identify at least one action matching or closely corresponding to an oracle correct move?
  \item[SG5.] Is the implied action executable given the current board state, respecting stacking order?
  \item[SG6.] Does the reasoning correctly interpret the size of the missing block (small vs.\ large) from the target view?
  \item[SG7.] Does the reasoning stay within this director's visible wall cells rather than describing cells belonging to another director?
\end{description}

\medskip\noindent
Each question answered \textbf{Yes} / \textbf{No} / \textbf{Unclear} with a one-sentence justification. Response format: JSON with keys \texttt{SG1}--\texttt{SG7}, each containing \texttt{answer} and \texttt{reason}.
\end{tcolorbox}
\caption{Spatial Grounding (SG) judge. Evaluated once per turn across 20 structures and 20 overall turns over the collective Director messages.}
 \label{fig:spatial_judge_prompt}
\end{figure*}

\paragraph{Mind Model Judge (MM)}
Evaluates the Director's public \texttt{<message>} in the context of 
other Directors' utterances and conversation history, assessing whether 
$\tomlistener{\mathrm{joint}}$ produces a non-redundant, uniquely 
informative message the Builder can act on without clarification 
(Fig.~\ref{fig:mind_model_judge_prompt}). Diagnoses F2 (flawed pragmatics).

\begin{figure*}[h!]
\centering
\begin{tcolorbox}[
  colback=gray!6!white,
  colframe=black,
  colbacktitle=gray!25!white,
  coltitle=gray!70!black,
  title={\textbf{Mind Model Judge (MM)}},
  fonttitle=\small,
  arc=4pt,
  boxrule=0.5pt,
]
\small
Given the director's \textbf{internal reasoning}, \textbf{public message}, \textbf{other directors' messages}, and \textbf{recent conversation history}, evaluate the Theory-of-Mind quality of the director's public communication.

\medskip
\begin{description}[leftmargin=2em, labelwidth=2em, labelsep=0.5em]
  \item[MM1.] Does the public message add information not already communicated by other directors in this or the immediately preceding turn?
  \item[MM2.] Does the message avoid repeating an instruction already given and acted upon in a previous turn?
  \item[MM3.] Does the message reflect awareness of what the builder already knows from the conversation history?
  \item[MM4.] Does the message accurately translate the key finding from internal reasoning into natural language without losing critical spatial information?
  \item[MM5.] Does the message focus on information uniquely visible from this director's wall rather than information another director could equally provide?
  \item[MM6.] Is the message specific enough for the builder to execute without further clarification, naming a block type, location, and action?
  \item[MM7.] If another director gave a conflicting instruction this turn, does the message acknowledge or attempt to resolve the conflict?
  \item[MM8.] Does the message show awareness of the boundary between what this director uniquely sees and what other directors can also see?
\end{description}

\medskip\noindent
Each question answered \textbf{Yes} / \textbf{No} / \textbf{Unclear} with a one-sentence justification. Response format: JSON with keys \texttt{MM1}--\texttt{MM8}, each containing \texttt{answer} and \texttt{reason}.
\end{tcolorbox}
\caption{Mind Model (MM) Judge. Evaluated once per turn across 20 structures and 20 overall turns over the collective Director messages. Note that oracle correct moves are not provided to the MM judge during evaluation.}
 \label{fig:mind_model_judge_prompt}
\end{figure*}

\paragraph{Pragmatic Sufficiency Judge (PS)}
Evaluates the collective Director output---all three public messages 
together---against the oracle candidate set, assessing whether the 
group jointly provided sufficient information for a rational Builder 
to identify a correct move without independent spatial reasoning 
(Fig.~\ref{fig:pragmatic_judge_prompt}). Diagnoses the group-level 
communication failure defined in Sec.~\ref{sec:directors_as_bps}, 
irreducible to any individual Director's SG or MM score.

Notably the three individual failure modes (F1--F3) from~\citet{nguyen2024languagemodelsboundedpragmatic} (\Cref{sec:directors_as_bps}) are each \textbf{agent-local}: F1 (limited search) and F3 (inefficient inference) are evaluated per Director in isolation, while F2 (flawed pragmatics)---though its \textit{target} is the joint listener $\tomlistener{\mathrm{joint}}$ in \craft{}'s multi-sender setting---is still \textit{measured} per Director through the MM judge, which evaluates each Director's message individually for epistemic calibration. The fourth failure mode introduced in Sec.~\ref{sec:directors_as_bps} is categorically different: it is both measured and defined at the group level, occurring when all three Directors individually produce non-redundant, calibrated messages (F2 absent) yet their collective output still fails to identify an oracle move for the Builder. No individual Director's SG or MM score can detect this failure---it is only measurable through the PS judge, which evaluates the full collective Director output against the oracle candidate set.

\begin{figure*}[h!]
\centering
\begin{tcolorbox}[
  colback=gray!6!white,
  colframe=black,
  colbacktitle=gray!25!white,
  coltitle=gray!70!black,
  title={\textbf{Pragmatic Sufficiency Judge (PS)}},
  fonttitle=\small,
  arc=4pt,
  boxrule=0.5pt,
]
\small
Given the \textbf{current board state}, \textbf{oracle correct moves}, \textbf{director messages}, and \textbf{builder confirmation}, evaluate whether the collective director messages were pragmatically sufficient to guide the builder toward a correct move.

\medskip
\begin{description}[leftmargin=2em, labelwidth=2em, labelsep=0.5em]
  \item[PS1.] Do the director messages collectively identify at least one specific location on the board that needs a block placed or removed?
  \item[PS2.] Do the director messages collectively specify the correct block type --- both color \emph{and} size (small vs.\ large) --- for at least one oracle correct move?
  \item[PS3.] Would a rational builder reading only these messages have sufficient information to select at least one oracle correct move \emph{without} independent spatial reasoning about the target structure?
  \item[PS4.] Do the messages use precise spatial anchors that uniquely identify the target location, rather than vague relative language that could map to multiple grid positions?
  \item[PS5.] Does the builder confirmation indicate it correctly understood the directors' collective intent, regardless of whether execution succeeded?
  \item[PS6.] If the builder did not execute the correct move, was the failure primarily attributable to director underspecification rather than builder execution mechanics (wrong layer, missing span, stacking violation)? \textit{N/A if oracle move was executed.}
\end{description}

\medskip\noindent
Each question answered \textbf{Yes} / \textbf{No} / \textbf{Unclear} (or \textbf{N/A} for PS6) with a one-sentence justification. Response format: JSON with keys \texttt{PS1}--\texttt{PS6}, each containing \texttt{answer} and \texttt{reason}.
 \end{tcolorbox}
 \vspace*{-2mm}
\caption{Pragmatic Sufficiency (PS) judge. Evaluated once per turn over the collective Director messages. Six binary questions targeting location specificity, block type precision, rational sufficiency, spatial anchoring, Builder understanding, and failure attribution.}
 \label{fig:pragmatic_judge_prompt}
\end{figure*}

\clearpage

\subsection{LLM Grader Validation}
\label{app:grader-val}

\an{Due to the scale of CRAFT evaluation---thousands of turn logs across 15 Director models, 20 structures, and multiple runs---we first validated GPT-4o-mini against GPT-4o on a stratified subsample (600 director-turns for SG/MM and 295 for PS). Table~\ref{tab:judge_validation} reports exact agreement, weighted $\kappa$, and Spearman rank correlation between the two judges. Agreement is strongest on the questions central to our analysis: MM5 (unique perspective utilization) achieves $\rho=0.753$ ($p=0.001$), SG7 (wall adherence) $\rho=0.772$ ($p=0.001$), and PS3 (rational sufficiency) $\rho=0.555$ ($p=0.032$), with PS6 (failure attribution) reaching near-perfect agreement ($\kappa=0.898$). \textbf{Overall, model rankings are consistently preserved across key diagnostic dimensions, indicating that GPT-4o-mini (vs GPT4o) reliably captures the relative differences between models.} Based on this validation and for computational cost-efficiency, we use GPT-4o-mini for full-scale evaluation reported in \Cref{fig:judge_questions} and \Cref{fig:judge_permodel}, enabling tractable large-scale analysis while maintaining alignment with a stronger judge on crucial metrics in our experiments.}

\begin{table}[t]
\centering
\small
\setlength{\tabcolsep}{4pt}
\begin{tabular}{llccccc}
\toprule
\textbf{Judge} & \textbf{Q} & \textbf{Exact} & \textbf{W.$\kappa$} & \textbf{Rank $\rho$} & \textbf{$p$} \\
\midrule
\multirow{7}{*}{SG}
 & \textbf{SG1} & \textbf{0.665} & \textbf{0.265} & \textbf{0.821} & \textbf{0.000} \\
 & \textbf{SG2} & \textbf{0.547} & \textbf{0.146} & \textbf{0.697} & \textbf{0.004} \\
 & \textbf{SG3} & \textbf{0.473} & \textbf{0.260} & \textbf{0.516} & \textbf{0.049} \\
 & SG4          &         0.510  &         0.139  &         0.427  &         0.113  \\
 & \textbf{SG5} & \textbf{0.565} & \textbf{0.314} & \textbf{0.649} & \textbf{0.009} \\
 & \textbf{SG6} & \textbf{0.545} & \textbf{0.271} & \textbf{0.825} & \textbf{0.000} \\
 & \textbf{SG7} & \textbf{0.902} & \textbf{0.570} & \textbf{0.772} & \textbf{0.001} \\
\midrule
\multirow{8}{*}{MM}
 & \textbf{MM1} & \textbf{0.697} & \textbf{0.294} & \textbf{0.563} & \textbf{0.029} \\
 & MM2          &         0.565  &         0.181  &         0.226  &         0.418  \\
 & MM3          &         0.360  &         0.039  &         0.496  &         0.060  \\
 & MM4          &         0.835  &         0.456  &         0.231  &         0.407  \\
 & \textbf{MM5} & \textbf{0.723} & \textbf{0.456} & \textbf{0.753} & \textbf{0.001} \\
 & \textbf{MM6} & \textbf{0.827} & \textbf{0.568} & \textbf{0.604} & \textbf{0.017} \\
 & \textbf{MM7} & \textbf{0.810} & \textbf{0.252} & \textbf{0.682} & \textbf{0.005} \\
 & \textbf{MM8} & \textbf{0.610} & \textbf{0.274} & \textbf{0.670} & \textbf{0.006} \\
\midrule
\multirow{6}{*}{PS}
 & PS1          &         0.864  &         0.413  &         0.380  &         0.163  \\
 & PS2          &         0.736  &         0.365  &         0.402  &         0.138  \\
 & \textbf{PS3} & \textbf{0.780} & \textbf{0.413} & \textbf{0.555} & \textbf{0.032} \\
 & PS4          &         0.888  &         0.373  &         0.467  &         0.079  \\
 & PS5          &         0.736  &         0.553  &         0.052  &         0.855  \\
 & \textbf{PS6} & \textbf{0.956} & \textbf{0.898} & \textbf{0.619} & \textbf{0.014} \\
\midrule
\multirow{3}{*}{Overall}
 & SG & 0.601 & 0.281 & 0.672 & --- \\
 & MM & 0.678 & 0.403 & 0.528 & --- \\
 & PS & 0.827 & 0.503 & 0.412 & --- \\
\bottomrule
\end{tabular}
\caption{\textbf{GPT-4o-mini vs GPT-4o agreement} on a stratified subsample
selected as 2 turns per model$\times$structure for SG and MM
(15 models $\times$ 20 structures $\times$ 2 turns $= 600$ director-turns each,
yielding $4{,}200$ and $4{,}800$ question-answer pairs)
and 1 turn per model$\times$structure for PS
($n=295$ turns, $1{,}770$ pairs).
Exact agreement and linearly weighted $\kappa$ measure absolute agreement;
Rank $\rho$ reports Spearman correlation of per-model mean scores between
judges ($n=15$ models). \textbf{Bold} rows indicate questions with
significant rank preservation ($p{<}0.05$). GPT-4o applies stricter
correctness thresholds throughout, producing lower Yes rates; the
preserved model rankings confirm that relative comparisons reported
in the paper are robust to this calibration difference.}
\label{tab:judge_validation}
\end{table}

Additionally, \Cref{tab:judge_scores} shows the mean LLM grader scores using our primary grader model (GPT4o-mini) for our main results--- averaged across 3 independent runs. The low SEM values show that graders evaluate consistently. Fig.~\ref{fig:judge_permodel} shows per-model LLM grader scores across all individual judge questions, showing where individual model failures may occur. \Cref{tab:feature_correlations_judge} shows Pearson and Spearman correlations of Director communication features with task progress at turn 20, showing that a specific subset of features, such as the {\it remove gap} have strong {\it negative} correlation with task progress, even if the features themselves imply better individual communication, such as introducing novel information or leveraging the Director's unique individual perspective view.

\begin{table*}[t]
\centering
\small
\begin{tabular}{lcccc}
\toprule
\textbf{Model} & \textbf{SG $\pm$ SEM} & \textbf{MM $\pm$ SEM} & \textbf{PS $\pm$ SEM} & \textbf{Progress} \\
\midrule
\multicolumn{5}{l}{\textbf{Frontier Models}} \\
Gemini-3-Flash                & $0.669 \pm 0.010$ & $0.237 \pm 0.009$ & $0.356 \pm 0.015$ & $0.675$ \\
GPT-4o                        & $0.792 \pm 0.006$ & $0.706 \pm 0.004$ & $0.492 \pm 0.010$ & $0.588$ \\
GPT-4o-Mini                   & $0.741 \pm 0.005$ & $0.632 \pm 0.005$ & $0.457 \pm 0.006$ & $0.333$ \\
GPT-4.1-Mini                  & $0.937 \pm 0.003$ & $0.787 \pm 0.003$ & $0.481 \pm 0.005$ & $0.312$ \\
Claude-Sonnet-4.6             & $0.910 \pm 0.006$ & $0.775 \pm 0.006$ & $0.444 \pm 0.006$ & $0.285$ \\
Gemini-2.5-Flash              & $0.932 \pm 0.003$ & $0.757 \pm 0.006$ & $0.479 \pm 0.011$ & $0.257$ \\
Gemini-3.1-Flash-Lite-Preview & $0.819 \pm 0.006$ & $0.597 \pm 0.007$ & $0.481 \pm 0.008$ & $0.257$ \\
\midrule
\multicolumn{5}{l}{\textbf{Open-Weight Models}} \\
Mistral-7B                    & $0.716 \pm 0.007$ & $0.555 \pm 0.005$ & $0.509 \pm 0.009$ & $0.631$ \\
Qwen-7B                       & $0.753 \pm 0.006$ & $0.530 \pm 0.005$ & $0.514 \pm 0.012$ & $0.612$ \\
Llama-8B                      & $0.744 \pm 0.006$ & $0.491 \pm 0.006$ & $0.500 \pm 0.011$ & $0.586$ \\
Gemma-9B                      & $0.673 \pm 0.007$ & $0.507 \pm 0.005$ & $0.481 \pm 0.010$ & $0.578$ \\
Qwen-72B                      & $0.745 \pm 0.007$ & $0.503 \pm 0.008$ & $0.378 \pm 0.010$ & $0.557$ \\
Qwen-14B                      & $0.748 \pm 0.005$ & $0.523 \pm 0.004$ & $0.511 \pm 0.008$ & $0.476$ \\
DeepSeek-Lite                 & $0.142 \pm 0.005$ & $0.334 \pm 0.004$ & $0.617 \pm 0.011$ & $0.419$ \\
Qwen-32B                      & $0.742 \pm 0.006$ & $0.571 \pm 0.005$ & $0.476 \pm 0.006$ & $0.339$ \\
\midrule
\multicolumn{5}{l}{\textbf{Group Averages}} \\
Open-Weight                   & $0.658 \pm 0.069$ & $0.502 \pm 0.024$ & $0.499 \pm 0.022$ & $0.525$ \\
Frontier                      & $0.829 \pm 0.036$ & $0.642 \pm 0.067$ & $0.452 \pm 0.019$ & $0.387$ \\
\bottomrule
\end{tabular}
\caption{LLM grader scores (mean $\pm$ SEM) for spatial grounding (SG), mind modeling (MM), and pragmatic sufficiency (PS), alongside task progress. Frontier models achieve higher SG and MM scores but lower progress than open-weight models. All scores are averaged over \textbf{three independent grader runs}.}
\label{tab:judge_scores}
\end{table*}

\begin{figure*}[t]
    \centering
    \begin{subfigure}{\linewidth}
        \includegraphics[width=\linewidth]{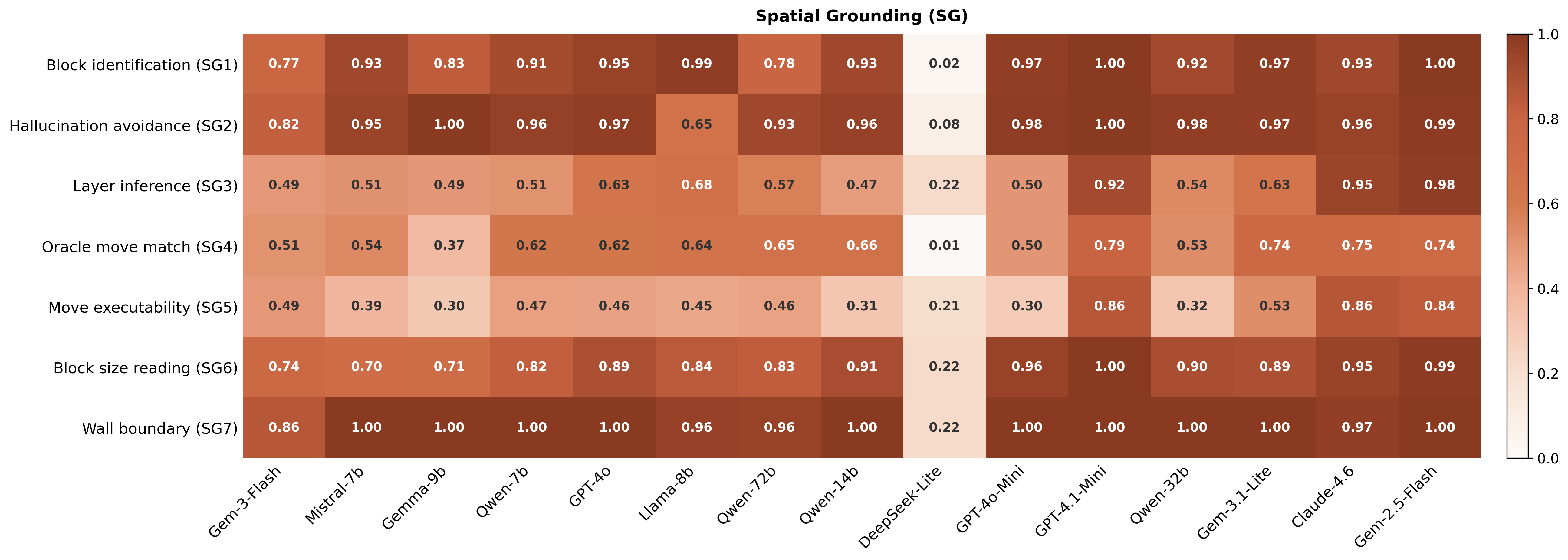}
    \end{subfigure}
    \vspace{2mm}
    \begin{subfigure}{\linewidth}
        \includegraphics[width=\linewidth]{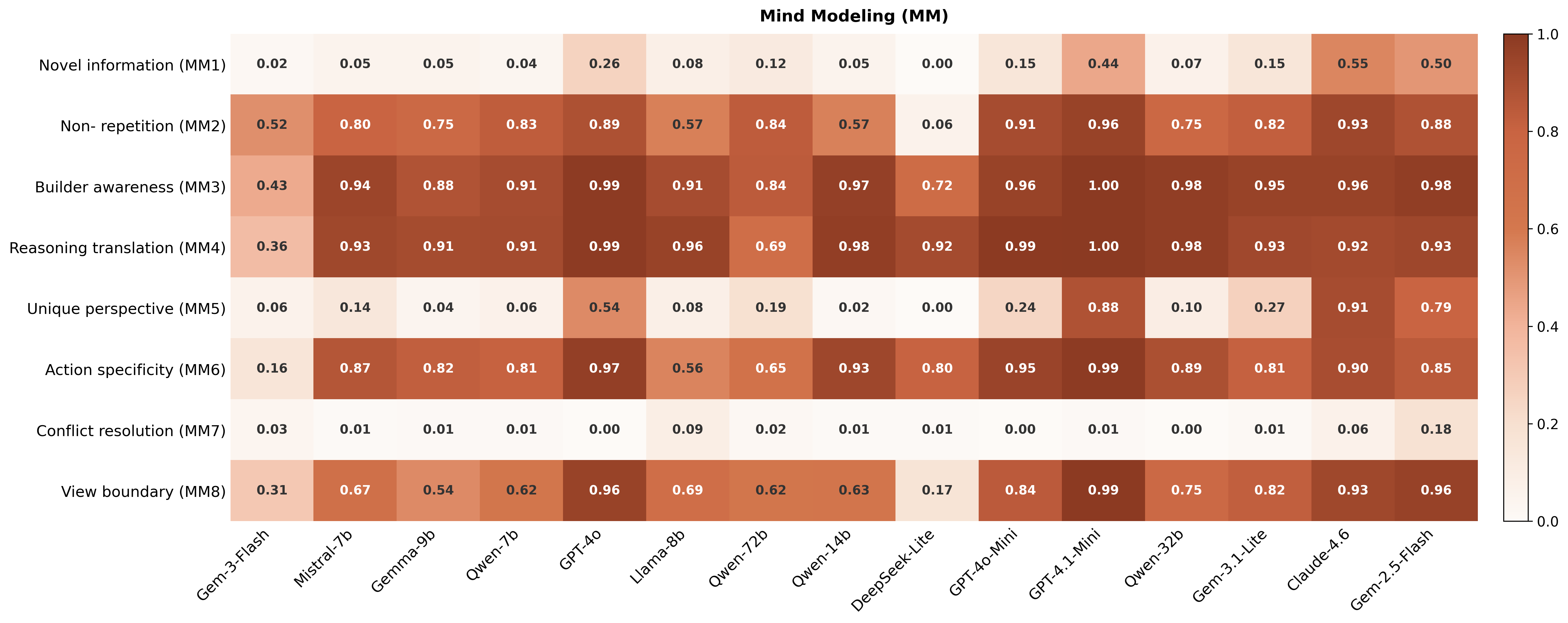}
    \end{subfigure}
    \vspace{2mm}
    \begin{subfigure}{\linewidth}
        \includegraphics[width=\linewidth]{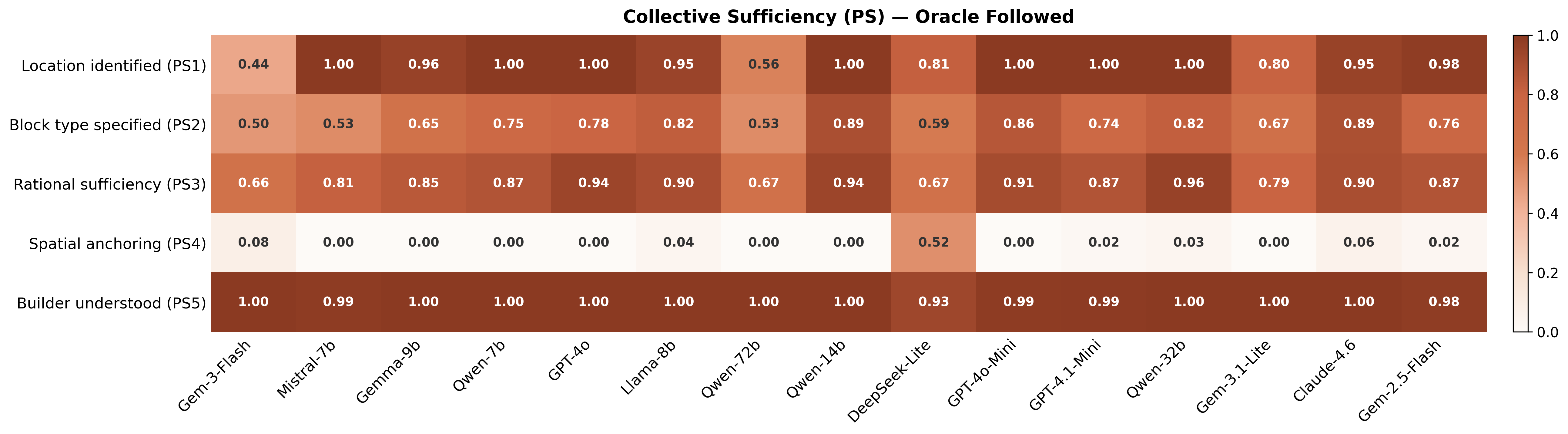}
    \end{subfigure}
    \vspace{2mm}
    \begin{subfigure}{\linewidth}
        \includegraphics[width=\linewidth]{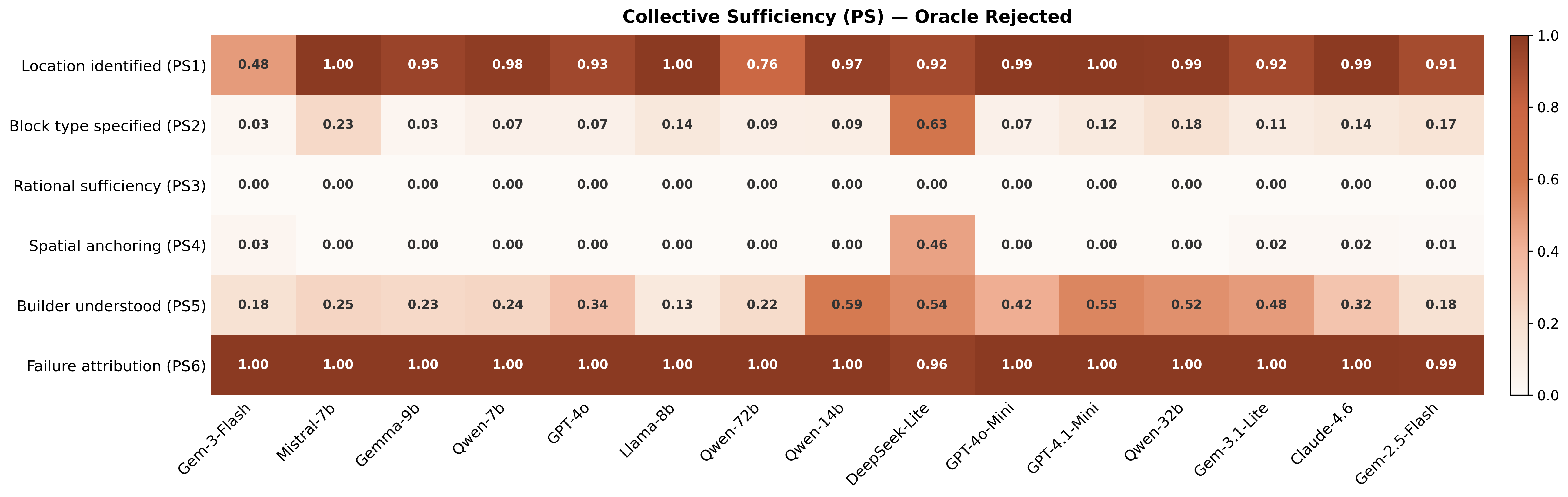}
    \end{subfigure}
    \caption{Per-model LLM grader scores across all judge questions for
             spatial grounding (top), mind modeling (second), and
             pragmatic sufficiency split by condition---oracle followed
             (third) and oracle rejected (bottom). Models are sorted by
             task progress descending along the x-axis. PS scores for
             oracle-followed turns exclude PS6 (Failure attribution),
             which is not applicable when the builder executes a correct
             move. All scores are averaged across independent grader
             runs ($n{=}3$).
             }
    \label{fig:judge_permodel}
\end{figure*}


\clearpage
\subsection{Reconciling Task Performance with LLM Judge results via “Remove-gap” analysis}
\label{app:mediation_analysis}

\begin{table}[t]
\centering
\small
\setlength{\tabcolsep}{6pt}
\begin{tabular}{lcccc}
\toprule
\textbf{Feature} & \textbf{Pearson $r$} & \textbf{$p$} & \textbf{Spearman $\rho$} & \textbf{$p$} \\
\midrule
\multicolumn{5}{l}{\textit{Behavioral}} \\
Remove gap                                    & $-0.780^{***}$ & 0.001 & $-0.785^{***}$ & 0.001 \\
\midrule
\multicolumn{5}{l}{\textit{Message calibration (MM)}} \\
Novel information (MM1)                       & $-0.619^{*}$   & 0.014 & $-0.652^{**}$  & 0.008 \\
Overall MM score                              & $-0.603^{*}$   & 0.017 & $-0.613^{*}$   & 0.015 \\
Unique perspective (MM5)                      & $-0.575^{*}$   & 0.025 & $-0.536^{*}$   & 0.039 \\
Action specificity (MM6)                      & $-0.514$       & 0.050 & $-0.311$       & 0.259 \\
Conflict resolution (MM7)                     & $-0.284$       & 0.304 & $-0.199$       & 0.478 \\
\midrule
\multicolumn{5}{l}{\textit{Spatial reasoning (SG)}} \\
Layer inference (SG3)                         & $-0.483$       & 0.068 & $-0.472$       & 0.076 \\
Move executability (SG5)                      & $-0.443$       & 0.098 & $-0.302$       & 0.274 \\
Overall SG score                              & $-0.273$       & 0.326 & $-0.554^{*}$   & 0.032 \\

\midrule
\multicolumn{5}{l}{\textit{Collective (PS)}} \\
Rational sufficiency (PS3)                    & $+0.666^{**}$  & 0.007 & $+0.633^{*}$   & 0.011 \\
\midrule
\multicolumn{5}{l}{\textit{Surface}} \\
Message length (words)                        & $+0.046$       & 0.870 & $-0.295$       & 0.286 \\
\midrule
\multicolumn{5}{l}{\textit{Mediation: unique perspective $\to$ remove gap $\to$ progress}} \\
Unique perspective alone                      & \multicolumn{4}{l}{$R^2 = 0.330$} \\
Remove gap alone                              & \multicolumn{4}{l}{$R^2 = 0.609$} \\
Unique perspective $+$ remove gap             & \multicolumn{4}{l}{$R^2 = 0.633$ \quad ($\Delta R^2 = 0.024$)} \\
Partial $r$ (unique perspective $\mid$ remove gap) & \multicolumn{4}{l}{$r = -0.247$, $p = 0.374$ (n.s.) } \\
\bottomrule
\end{tabular}
\caption{Pearson and Spearman correlations of Director communication
         features with task progress at turn 20 ($n = 15$ models).
         Features are ordered by absolute Pearson $r$.
         Judge question codes in parentheses refer to the evaluation
         framework in Sec.~\ref{app:llm_judge_prompts}.
         Significance: $^{*}p{<}0.05$, $^{**}p{<}0.01$, $^{***}p{<}0.001$.}
\label{tab:feature_correlations_judge}
\end{table}

Among the features in Table~\ref{tab:feature_correlations_judge}, MM5 and PS3 stand out as the strongest individual-level and group-level predictors of task progress respectively---and as the only features that directly implicate the correction spiral mechanism through their relationship with remove gap---motivating a mediation analysis for each.
Specifically, to test whether remove gap mediates the relationship between unique perspective utilization (MM5) and task progress, we regressed task progress on MM5 alone ($R^2 = 0.330$), remove gap alone ($R^2 = 0.609$), and both jointly ($R^2 = 0.633$). Adding MM5 to remove gap increases explained variance by only 
2.4 percentage points, and the partial correlation between MM5 and progress controlling for remove gap is non-significant ($r = -0.247$, $p = 0.374$). Remove gap fully mediates the 
MM5--progress relationship: Directors who leverage their unique perspective produce more correction-oriented instructions, 
driving over-removal that consumes the turn budget without 
advancing task progress. The direct effect of communication 
quality on task outcomes is entirely absorbed by the 
behavioral signature it produces overall.

\paragraph{Collective Pragmatic Sufficiency as an Independent Path.}

Collective pragmatic sufficiency (PS3) reveals a second, independent upstream path to the same mediator. PS3 directly predicts remove gap ($r = -0.565$, $p = 0.028$) and task progress ($r = +0.666$, $p = 0.007$; $\rho = +0.633$, $p = 0.011$), and remains a significant predictor of progress after controlling for MM score (partial $r = 0.628$, $p = 0.012$). MM5 and PS3 are not significantly correlated ($r = -0.363$, $p = 0.183$) and explain complementary variance in remove gap (MM5 alone $R^2 = 0.329$, PS3 alone $R^2 = 0.319$, jointly $R^2 = 0.475$), confirming that they operate through distinct mechanisms.

In contrast to MM5 pathway which is \textit{proactive}, the PS3 path is \textit{reactive}: when collective Director output fails to identify an oracle move, the Builder makes incorrect placements that accumulate board errors, subsequently triggering removal instructions. Both paths converge on remove gap as the proximal behavioral mechanism, consistent with the full mediation established above.

\paragraph{Correction Spirals: A Frontier Model-specific Failure Mode} Importantly, splitting by model group reveals that this mediation is \textit{asymmetric}. For \textbf{frontier} models ($n = 7$), remove gap alone explains $R^2 = 0.941$ of progress variance ($r = -0.970$, $p = 0.006$), with full mediation of MM5's effect confirmed (partial $r = 0.098$, $p = 0.876$ after controlling for remove gap). For \textbf{open-weight} models ($n = 8$), remove gap explains only $R^2 = 0.286$ of progress variance ($r = -0.535$, $p = 0.172$), and MM5 shows no significant relationship with remove gap ($r = 0.205$, $p = 0.627$). Notably, MM5 correlates \textit{positively} with progress in open-weight models ($r = +0.312$)---the opposite direction from frontier models---indicating that unique perspective utilization does not activate the correction spiral behavior when epistemic calibration is weaker overall. The correction spiral is therefore a \textbf{frontier-specific failure mode}: frontier models' superior ability to leverage unique wall perspective produces correction-oriented instructions that drive over-removal, in a way that open-weight models' weaker epistemic calibration does not.

\paragraph{Ruling Out Context Availability as a Confound in Epistemic Uniqueness (MM5).}

A potential confound in the frontier--open-weight performance asymmetry is that smaller models may receive insufficient context to reason about epistemic uniqueness. We rule this out directly: all Directors receive \textit{identical}\footnote{We do not mean identical in that every game contains the same inputs exactly, but that the information available to directors is consistent across models during game runs.} inputs at each turn---the same partial target view, current board state, and full conversation history. The information needed to determine what is uniquely visible from one's wall---i.e., what co-Directors have already communicated and what remains exclusively within one's view---is equally available to all models. Thus, the gap in MM5 scores reflects differences in \textit{reasoning over} context rather than \textit{access to} it. This implies that the correction spiral is not an artifact of context limitations or information bottlenecks, but a \textit{capability threshold effect}: sufficient epistemic calibration is required to translate available context into uniquely grounded correction instructions. Open-weight models fall below this threshold not because they see less, but because they cannot reliably act on shared context in an epistemically calibrated way. \craft{}’s physically constrained environment where over-correction directly impacts task progress, enables this threshold effect to surface clearly, rather than being masked in more permissive evaluation settings.


\clearpage

\section{Additional Experimental Results and Examples}
\label{app:additional_experimental_results}

\subsection{Connection Between Homogeneity of Director Groups and Equal Weighting in joint ToM listener (via $\lambda_i$ in Eq.~\ref{eqn:joint_tom}).} 
\label{app:lambda_consistency_validation}

As noted in \Cref{app:proofs}, we perform an empirical analysis of per-Director influence using Builder attribution or the “credit-rate” on oracle-correct turns, i.e., instances where the Builder executes a move matching the oracle, which we treat as reliable signals of task-aligned behavior (see Appendix~\ref{app:structure_and_oracle}). We compute normalized attribution weights $\hat{\lambda}_i$ based on which Director’s instruction is followed during successful execution. \textbf{As shown in Table~\ref{tab:lambda_full}, $\hat{\lambda}_i$ are approximately uniform across models, indicating that the Builder integrates Director signals symmetrically in this setting.} Importantly, these estimates reflect \textit{observed behavioral influence} rather than counterfactual causal credit. Nevertheless, their near-uniformity provides empirical support for the equal-weighting behavior implied by symmetry, while deviations in heterogeneous settings would indicate meaningful differences in agent influence and motivate learning $\lambda_i$.

 
Specifically, for each oracle-followed turn, we parse the Builder's confirmation text to identify \textit{which} Directors received attribution credit. We define the credit rate for Director $D_i$ as the fraction of oracle-followed turns in which the Builder's confirmation explicitly references $D_i$, normalized by $D_i$'s participation rate across all turns (mean $\approx 0.70$ per Director, approximately equal across all 15 models---confirming that participation frequency does not confound the analysis).

For a robust analysis, we additionally apply a degeneracy filter to remove turns where Director messages are empty or truncated, raw board state JSON dumps, or near-duplicate copies of other Directors' messages. Note that progress metrics and LLM grader runs/scores are computed over \textit{all} logged turns to capture the full distribution of Director behavior including failures. In contrast, the $\lambda_i$ credit attribution is restricted to oracle-correct turns (where the builder followed the oracle-suggested move exactly) and clean Director messages because only on these turns does Builder confirmation text reflect a meaningful attribution of instruction credit to a specific Director. Specifically, empirical $\lambda_i$ values are then computed by normalizing the clean credit rates to sum to one per model. As seen in Table~\ref{tab:lambda_full}, empirical $\lambda_i$ values are close to equal weighting across all 15 models, with mean values of $\lambda_{D1} = 0.332$, $\lambda_{D2} = 0.352$, and $\lambda_{D3} = 0.316$ overall. Deviations from equal weighting ($\lambda = 0.333$) are small and consistent: $\Delta \lambda_{D2} = +0.019$, $\Delta \lambda_{D1} = -0.001$, $\Delta \lambda_{D3} = -0.017$, with the same pattern holding for both open-weight and frontier groups. This validates the equal $\lambda_i$ assumption as a reasonable approximation for the homogeneous Director configuration used in this work.

\paragraph{Structural Information Geometry.}
The small but consistent deviation $\lambda_{D2} > \lambda_{D1} > \lambda_{D3}$ is structurally interpretable at the aggregate level. Averaged across models, Director $D_2$'s far-wall view covers $51.7\%$ of oracle-selected positions---the highest among the three---including interior cells $(1,1)$ and $(2,1)$ that are uniquely visible to $D_2$. In comparison, $D_1$ and $D_3$ cover $44.1\%$ and $36.5\%$, respectively. Despite this advantage, the Builder does not fully exploit $D_2$'s structural coverage: its average weight ($\lambda_{D2} \approx 0.352$) remains substantially below its coverage ($0.517$), while $D_3$'s weight ($\lambda_{D3} \approx 0.316$) is closest to its coverage ($0.365$, gap $-0.049$). This indicates that the Builder distributes influence more evenly than the underlying information geometry would suggest, reflecting a conservative aggregation strategy that limits dominance by any single Director despite asymmetric visibility. This is consistent with prior work~\citep{riedl2025emergent} that suggests that group “synergy” can reliably emerge in complex multi-agent systems beyond structural constraints encoded in the task setup.


\begin{table*}[t]
\centering
\small
\setlength{\tabcolsep}{3.5pt}
\begin{tabular}{lccccccccc}
\toprule
\textbf{Model} & $\lambda_{D1}$ & $\lambda_{D2}$ & $\lambda_{D3}$ & $\lambda$-std & \textbf{Filtered} & \textbf{Total} & \textbf{Oracle} & \textbf{Orc\%} & \textbf{Deg\%} \\
\midrule
\multicolumn{10}{l}{\textbf{Frontier Models}} \\
\midrule
Claude-Sonnet-4.6             & 0.336 & 0.363 & 0.302 & 0.0250 & 35  & 400 & 176 & 44.0\% & 11.8\% \\
Gemini-2.5-Flash              & 0.340 & 0.351 & 0.308 & 0.0182 & 48  & 400 & 164 & 41.0\% & 15.7\% \\
Gemini-3-Flash                & 0.344 & 0.360 & 0.296 & 0.0270 & 84  & 385 & 269 & 69.9\% & 21.3\% \\
Gemini-3.1-Flash-Lite-Preview & 0.323 & 0.347 & 0.331 & 0.0100 & 28  & 400 & 158 & 39.5\% & 10.1\% \\
GPT-4.1-Mini                  & 0.334 & 0.320 & 0.346 & 0.0106 & 26  & 400 & 171 & 42.8\% & 8.9\% \\
GPT-4o-Mini                   & 0.342 & 0.341 & 0.317 & 0.0118 & 39  & 400 & 192 & 48.0\% & 11.6\% \\
GPT-4o                        & 0.317 & 0.376 & 0.307 & 0.0306 & 47  & 397 & 262 & 66.0\% & 10.5\% \\
\midrule
\multicolumn{10}{l}{\textbf{Open-weight Models}} \\
\midrule
DeepSeek-Lite                 & 0.334 & 0.354 & 0.311 & 0.0174 & 150 & 400 & 248 & 62.0\% & 33.6\% \\
Gemma-9b                      & 0.325 & 0.342 & 0.333 & 0.0067 & 52  & 400 & 245 & 61.3\% & 12.9\% \\
Llama-8b                      & 0.316 & 0.368 & 0.317 & 0.0244 & 89  & 396 & 260 & 65.7\% & 20.1\% \\
Mistral-7b                    & 0.315 & 0.373 & 0.312 & 0.0281 & 62  & 388 & 258 & 66.5\% & 13.2\% \\
Qwen-14b                      & 0.340 & 0.360 & 0.300 & 0.0253 & 51  & 391 & 218 & 55.8\% & 12.1\% \\
Qwen-32b                      & 0.327 & 0.336 & 0.338 & 0.0049 & 30  & 400 & 167 & 41.8\% & 10.5\% \\
Qwen-72b                      & 0.325 & 0.361 & 0.314 & 0.0200 & 61  & 399 & 244 & 61.2\% & 15.2\% \\
Qwen-7b                       & 0.318 & 0.352 & 0.331 & 0.0141 & 66  & 390 & 268 & 68.7\% & 13.0\% \\
\bottomrule
\end{tabular}
 
\caption{Empirical $\lambda_i$ weights and associated diagnostics. $\lambda_{D1}$, $\lambda_{D2}$, $\lambda_{D3}$ denote the empirical credit share attributed to each Director by the Builder on oracle-correct turns, normalized to sum to one. \textbf{Oracle} is the number of turns where the Builder executed a verified oracle-correct move. \textbf{Orc\%} is the fraction of total turns corresponding to oracle-correct moves. \textbf{Deg\%} denotes the percentage of degenerate credits removed during filtering. Total represents the total number of actual turns evaluated (termination condition) across 20 test structures.}
\label{tab:lambda_full}
\end{table*}

\paragraph{Remove-Gap Evolution Across Turns} Figs.~\ref{fig:remove_evolution_base} and \ref{fig:remove_evolution_frontier} show evolution of Builder 
remove actions across turns against oracle-prescribed remove actions across all evaluations for all models. This comparison highlights the {\it over-removal} that proprietary frontier models typically engage in when compared to open-weight models. Even larger open-weight models engage in over-removal, leading to {\it correction spirals} (Fig.~\ref{fig:qwen32b_case_detailed}, \Cref{tab:qwen32b_trace}).



\begin{figure}[h!]
  \centering
  \includegraphics[width=\linewidth]{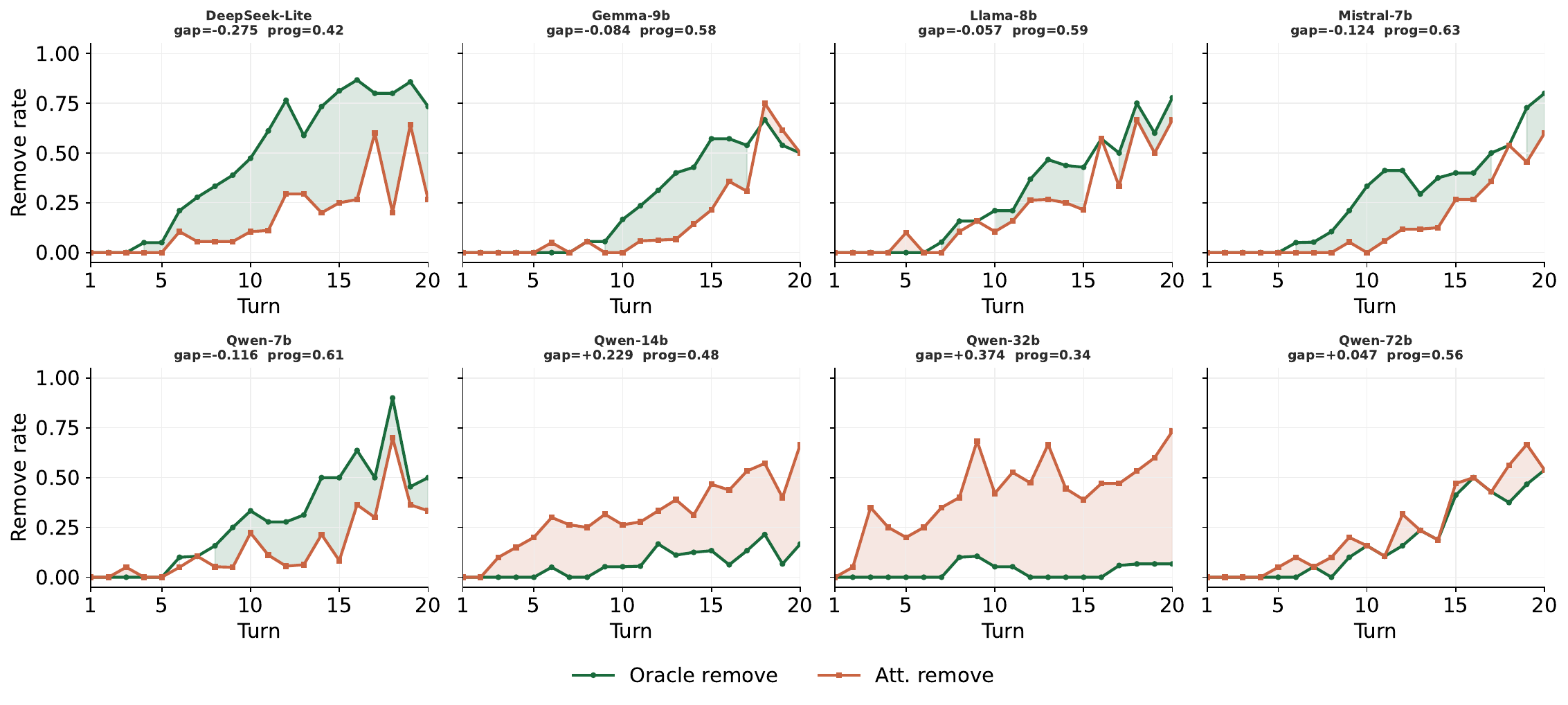}
  \caption{Oracle-prescribed vs.\ attempted remove rate per turn for all 
           base open-weight models.}
  \label{fig:remove_evolution_base}
\end{figure}

\begin{figure}[h!]
  \centering
  \includegraphics[width=\linewidth]{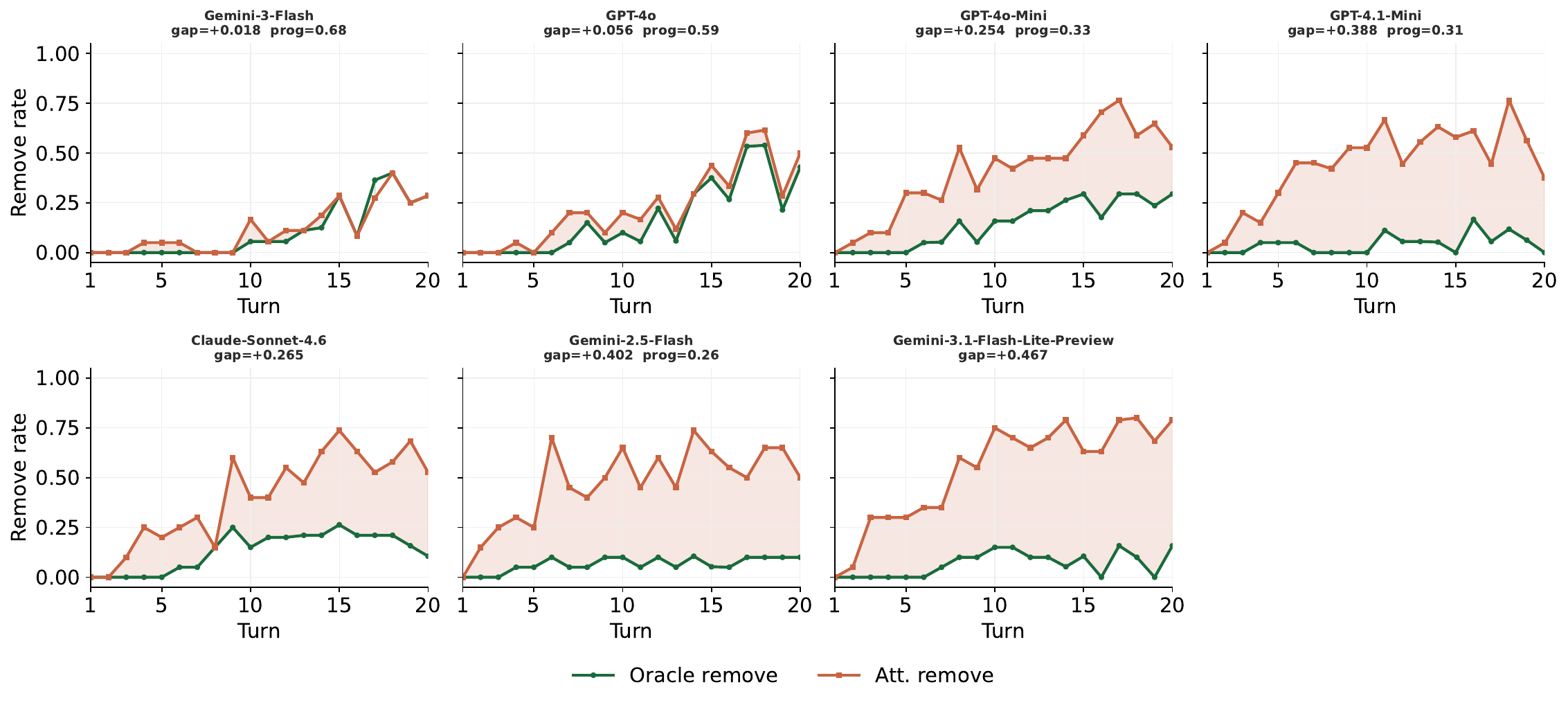}
  \caption{Oracle-prescribed vs.\ attempted remove rate per turn for all 
           frontier and proprietary models.  }
  \label{fig:remove_evolution_frontier}
\end{figure}

\paragraph{Failed Moves}
During our evaluation runs on 20 target structures, we found 47 turns with no recorded move attempt (30 for base models, 17 for proprietary models). Inspection of the raw logs revealed these are attributable to malformed or
empty Director responses that left the Builder with insufficient information
to construct a valid action, rather than any systematic model failure.
Qwen-72b accounts for the majority of base model cases (23 of 30). These turns are excluded for the involved models in the \texttt{FAIL} or failed move rates in \Cref{tab:oracle_performance}. Additionally, the Builder agent's clarification rate is negligible across all models---at most one
\texttt{CLARIFY} response per model across 385--400 turns (rate $\leq 0.003$)---confirming
that Builders consistently attempt moves rather than request additional information,
even when Director instructions could be ambiguous.


\begin{table}[h!]
\centering
\small
\setlength{\tabcolsep}{6pt}
\begin{tabular}{lccc}
\toprule
\textbf{Model} & \textbf{Orc\_Remove$\downarrow$} & \textbf{Att\_Remove} & \textbf{Gap} \\
\midrule
\textit{Open-weight models} \\
\underline{DeepSeek-Lite} & 0.433 & 0.148 & \underline{$-$0.275} \\
Mistral-7b    & 0.238 & \textbf{0.124} & \textbf{$-$0.124} \\
Qwen-7b       & 0.238 & 0.205 & $-$0.116 \\
Gemma-9b      & 0.211 & 0.122 & $-$0.084 \\
Llama-8b      & 0.232 & 0.277 & $-$0.057 \\
Qwen-72b      & 0.162 & 0.245 & $+$0.047 \\
Qwen-14b      & 0.062 & 0.355 & $+$0.229 \\
Qwen-32b      & \textbf{0.027} & 0.448 & $+$0.374 \\
\midrule
\textit{Frontier models} \\
Gemini-3-Flash        & \textbf{0.073} & \textbf{0.196} & \textbf{$+$0.018} \\
GPT-4o                & 0.146 & 0.280 & $+$0.056 \\
Claude-Son.-4.6       & 0.130 & 0.395 & $+$0.265 \\
GPT-4o-Mini           & 0.139 & 0.432 & $+$0.254 \\
GPT-4.1-Mini          & 0.040 & 0.462 & $+$0.388 \\
Gemini-2.5-Flash      & 0.065 & 0.468 & \underline{$+$0.402} \\
Gemini-3.1-Flash-Lite & 0.066 & 0.540 & $+$0.467 \\
\midrule
Overall               & 0.148 & 0.314 & $+$0.135 \\
\midrule
\multicolumn{4}{l}{\textit{Turn-level Spearman $\rho$ vs oracle adherence}} \\
\multicolumn{1}{l}{Gap}         & \multicolumn{3}{l}{$\rho = -0.543,\ p{<}0.001$} \\
\multicolumn{1}{l}{Att\_Remove} & \multicolumn{3}{l}{$\rho = -0.569,\ p{<}0.001$} \\
\bottomrule
\end{tabular}
\caption{Remove action rates per model. Oracle remove rate reflects 
what the \craft{} game engine considers necessary given the current 
board state; attempted remove rate reflects what Directors instructed 
the Builder to do. Negative gap indicates under-removal (Builder 
removes less than oracle prescribes); positive gap indicates 
over-removal. Bold = best within group. Underline = largest absolute 
gap. Turn-level Spearman correlations use full oracle adherence as 
outcome ($n=5,382$).}
\label{tab:remove_rates}
\end{table}

\begin{figure*}[h!]
    \centering
    \includegraphics[width=\linewidth]{plots/appendix_plots/case_qwen32b_T13.png}
    \caption{\textbf{Three turns, zero progress: a \craft{} correction spiral (Qwen-32b, \texttt{structure\_001}, T10--T14)} (see \Cref{tab:qwen32b_trace} 
for the full turn-by-turn trace). D1 (left wall) sees a large 
\borange{orange} domino at layer~1 spanning \texttt{(0,0)--(1,0)} 
sitting directly on top of a small \byellow{yellow} at \texttt{(0,0)} 
layer~0 and a large \bblue{blue} domino spanning \texttt{(1,0)--(2,0)} 
at layer~0. The \borange{orange} domino is therefore the \emph{top} 
block at both positions and is directly accessible for removal --- 
yet D1 instructs ``remove the large orange block from the 
\emph{bottom} layer,'' targeting layer~0 which is occupied by the 
\bblue{blue} domino. D2 (far wall) independently sees \borange{orange} 
small blocks at layer~1 on \texttt{(0,0)} and \texttt{(0,1)} but 
similarly instructs removal from ``bottom left corner,'' again 
specifying layer~0. D3 (right wall) sees only a single \borange{orange} 
block at \texttt{(2,2)} layer~0 and has no visibility into either 
problem position, remaining entirely unaware of the “deadlock”. 
The full grid minimap shows five occupied positions with maximum 
height~3 at \texttt{(0,1)}; the \borange{orange} blocks at 
\texttt{(0,0)} and \texttt{(1,0)} each sit at height~2 as the 
topmost block, confirming they are accessible. Despite this, neither 
D1 nor D2 corrects the layer specification across turns 10, 13, and 
14---the Builder's oracle recommends forward placement moves throughout 
(\okbox{\texttt{place gs @(0,0) L2}}, 
\okbox{\texttt{place rl @(1,0) L2}}) but the \textbf{Builder follows the 
Director consensus} and attempts the \textit{same} failed remove three times, 
consuming turns without any board state change.}
\label{fig:qwen32b_case_detailed}
    
\end{figure*}

\begin{table}[h!]
\centering
\small
\setlength{\tabcolsep}{4pt}
\begin{threeparttable}
\begin{tabular}{lp{3.5cm}p{7.5cm}}
\toprule
\textbf{Turn} & \textbf{Oracle (top move)} & \textbf{Director messages \& outcome} \\
\midrule

T10 & \okbox{\texttt{place gs @(0,0) L2}} 
    & D1: ``remove the large \borange{orange} block from the 
      middle-left of my \failbox{bottom layer}''
      $\cdot$ D2: ``remove \borange{orange} from bottom left corner''
      $\cdot$ \textit{Builder follows D1} $\rightarrow$ 
      \failbox{\texttt{Cannot remove layer 0 at (1,0)}} \\

T11 & \okbox{\texttt{place gs @(0,0) L2}}
    & D2: same remove instruction $\cdot$ D3: ``confirm small 
      orange in bottom right''
      $\cdot$ \textit{Builder follows D3} $\rightarrow$ 
      \okbox{\texttt{remove os @(0,2) L0}} $\checkmark$ 
      progress=0.355 \\

T12 & \okbox{\texttt{place gs @(0,0) L2}}
    & D1: same remove instruction $\cdot$ D3: ``add small orange 
      to bottom right''
      $\cdot$ \textit{Builder follows D3} $\rightarrow$ 
      \okbox{\texttt{place os @(2,2) L0}} $\checkmark$ 
      progress=0.386 \\

\midrule
T13 & \okbox{\texttt{place gs @(0,0) L2}}
    & D1: ``get rid of large \borange{orange} from 
      \failbox{bottom layer} middle-left''
      $\cdot$ D2: ``remove \borange{orange} from 
      \failbox{bottom left corner}''
      $\cdot$ D3: unaware of deadlock, confirms own wall
      $\cdot$ \textit{Builder follows D1+D2 consensus} $\rightarrow$
      \failbox{\texttt{Cannot remove layer 0 at (1,0)}} \\

T14 & \okbox{\texttt{place gs @(0,0) L2}}
    & D1: identical instruction as T13
      $\cdot$ D3: ``\textit{focus on removing the large orange 
      from D1's bottom layer}'' (endorses wrong instruction)
      $\cdot$ \textit{Builder follows D1+D3 consensus} $\rightarrow$
      \failbox{\texttt{Cannot remove layer 0 at (1,0)}} \\

\bottomrule
\end{tabular}
\begin{tablenotes}
\small
\item Board state at \texttt{(1,0)} is \texttt{[bl, ol]} throughout 
T10--T14: \bblue{bl} at layer~0, \borange{ol} at layer~1. 
The \borange{orange} block is accessible at layer~1 but directors 
consistently specify \failbox{layer~0}. Oracle recommends placement 
moves throughout; no director ever issues the correct prerequisite 
\okbox{\texttt{remove ol @(1,0) L1}}.
\end{tablenotes}
\end{threeparttable}
\caption{Turn-by-turn trace of the Qwen-32b correction spiral on 
\texttt{structure\_001}. Despite the oracle recommending forward 
placement moves at every turn, D1 and D2 repeatedly issue remove 
instructions targeting the wrong layer, while D3 remains unaware 
of the deadlock. The Builder escapes the spiral on T11 and T12 by 
following D3 instead, but returns to the failed pattern when D1 
and D2 reach consensus on T13--14.}
\label{tab:qwen32b_trace}
\end{table}

\clearpage
\section{Experimental Configuration}
\label{app:exp-config}

\paragraph{Hardware Setup and Decoding Settings}
All open-weight Director models were served locally on a machine with two NVIDIA RTX PRO 6000 Blackwell GPUs (96 GB VRAM each), using {\tt bfloat16} precision with 4-bit quantization for Qwen-32B and Qwen-72B. A single 20-turn game on one structure took approximately 5 minutes for smaller open-weight models (e.g., Qwen-7B, Llama-8B) and up to 15 minutes for the 72B model. Frontier API models were called sequentially with per-model rate-limiting; a single GPT-4o or GPT-4o-Mini run (20 turns) over one structure took approximately 12 minutes owing to API latency across 20 turns. All runs used a fixed random seed per structure-run index pair for reproducibility, with each model evaluated with LLM graders across 3 independent runs and 20 structures. Open-weight models use greedy decoding (temperature=0) for reproducibility, 
since local inference is fully deterministic; frontier API models use 
temperature=0.7 as API providers do not guarantee deterministic outputs 
even at temperature=0~\citep{gpt4}.\footnote{\url{https://152334h.github.io/blog/non-determinism-in-gpt-4/}}. 

\paragraph{Termination condition} In \craft{}, games are terminated early upon full structure completion to save additional compute needed for redundant turns; across all 15 models, 16 of 300 total game runs (5.3\%) reached full completion before turn 20, with Gemini-3-Flash and Mistral-7B achieving the most early completions (4 each), consistent with their leading task progress scores. Where the conversation history accumulates $>$ 50 total messages during a game, we truncate it to 40 previous messages for the remaining turns to reduce context-exposure~\citep{hongcontext} and cost. 


\clearpage

\end{document}